\documentclass[12pt]{article}

\usepackage{arxiv}

\usepackage[utf8]{inputenc} 
\usepackage[T1]{fontenc}    
\usepackage{url}     
\usepackage{booktabs}       
\usepackage{amsfonts}       
\usepackage{nicefrac}       
\usepackage{microtype}      
\usepackage{lipsum}		
\usepackage{graphicx}
\usepackage{doi}

\usepackage{amsfonts}
\usepackage{latexsym,amssymb,amsmath,amsthm,amsfonts,graphicx,float}
\usepackage{stix}
\usepackage{tikz}
\allowdisplaybreaks
\usepackage{array}
\usepackage{colortbl}
\usepackage{hhline}
\usepackage{soul}
\usepackage{caption}
\usepackage{subcaption}
\usepackage{verbatim}
\usepackage{amsfonts}
\usepackage{amsmath}
\usepackage{csquotes} 
\usepackage{booktabs}
\usepackage{dsfont}
\usepackage{xparse}
\usepackage{amsmath,amssymb}
\usepackage{graphicx}

\newcommand{\inner}[2]{\left< {#1}, {#2} \right>}
\newcommand{\innerE}[2]{\mathbf{E}\left[{#1} {#2}\right]}
\newcommand{\D}{\Delta^{\varphi}}
\newcommand{\h}{\mathcal{H}^{\D}} 
\newcommand{\hinv}{\Tilde{\mathcal{H}}^{\D}}

\newtheorem{theorem}{Theorem}
\newtheorem{definition}{Definition}
\newtheorem{example}{Example}
\newtheorem{remark}{Remark}
\newtheorem{proposition}{Proposition}
\newtheorem{corollary}{Corollary}

\title{The Hyperdimensional Transform: a Holographic Representation of Functions}

\date{} 					

\author{\href{https://orcid.org/0000-0003-4564-1672}{Pieter Dewulf$^{\hspace{1mm}\includegraphics[scale=0.06]{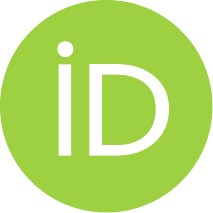}\hspace{1mm}}$},  \href{https://orcid.org/0000-0003-0903-6061}{Michiel Stock$^{\hspace{1mm}\includegraphics[scale=0.06]{orcid.pdf}\hspace{1mm}}$}, \href{https://orcid.org/0000-0002-3876-620X}{Bernard De Baets$^{\hspace{1mm}\includegraphics[scale=0.06]{orcid.pdf}\hspace{1mm}}$} \\
\texttt{pieter.dewulf@ugent.be}, \texttt{ michiel.stock@ugent.be}, \texttt{ bernard.debaets@ugent.be} \\
KERMIT, Department of Data Analysis and Mathematical Modelling \\
Ghent University}

\date{}




\hypersetup{
pdftitle={The Hyperdimensional Transform: a Holographic Representation of Functions},
pdfauthor={Pieter Dewulf, Michiel Stock, Bernard De Baets},
pdfkeywords={Integral transforms, differential equations, hyperdimensional computing, vector symbolic architectures, machine learning, efficient computing},
}

\begin{document}
\maketitle

\begin{abstract}
Integral transforms are invaluable mathematical tools to map functions into spaces where they are easier to characterize.
We introduce the hyperdimensional transform as a new kind of integral transform. It converts square-integrable functions into noise-robust, holographic, high-dimensional representations called hyperdimensional vectors. The central idea is to approximate a function by a linear combination of random functions.
We formally introduce a set of stochastic, orthogonal basis functions and define the hyperdimensional transform and its inverse. We discuss general transform-related properties such as its uniqueness, approximation properties of the inverse transform, and the representation of integrals and derivatives.
The hyperdimensional transform offers a powerful, flexible framework that connects closely with other integral transforms, such as the Fourier, Laplace, and fuzzy transforms. 
Moreover, it provides theoretical foundations and new insights for the field of hyperdimensional computing, a computing paradigm that is rapidly gaining attention for efficient and explainable machine learning algorithms, with potential applications in statistical modelling and machine learning. In addition, we provide straightforward and easily understandable code, which can function as a tutorial and allows for the reproduction of the demonstrated examples, from computing the transform to solving differential equations.
\end{abstract}

\keywords{Integral transforms, differential equations, hyperdimensional computing, vector symbolic architectures, machine learning, efficient computing}

\section{Introduction}\label{sec:intro}
\subsection{Integral transforms}
In mathematics, various kinds of integral transformations (often simply called integral transforms, emphasizing the result of the transformation) exist that map functions from their original space into a new space, e.g., the Laplace transform, the Fourier transform, the wavelet transform, the fuzzy transform and the Z-transform, to name but a few~\cite{beerends2003fourier,debnath2014integral, perfilieva2001fuzzy,sundararajan2016discrete}. The underlying idea is that some problems may be solved more easily in the new space and that the solution in this new space can be mapped back (approximately) to the original space. For example, the Laplace transform is a well-known tool for solving differential equations; the Fourier transform is a tool for analyzing functions in the frequency domain; and the fuzzy transform can be used to work with noisy data and for data compression purposes, in addition to solving differential equations. An integral transform can generally be expressed as a mathematical operator $\mathcal{T}$, taking the following form:
\[
\left(\mathcal{T}f \right)(s) = \int_{x_1}^{x_2} f(x) K(x,s)\, {\mathrm d}x\,.
\]
Here, the function $f$ is transformed into a function $\mathcal{T}f$, and the type of transformation is specified by the domains of $f$ and $\mathcal{T}f$ and by the integral kernel $K(x,s)$, which can be seen as a family of basis functions. 
For example, the Laplace transform converts a function of a real variable $x$ into a function of a complex variable $s$, and the exponential basis functions $e^{-sx}$ determine the integral kernel. 
The Z-transform converts a discrete-time signal of a variable $x$ into a function of a complex variable $s$, with integral kernel $s^{-x}$. 
As a last example, the fuzzy transform converts a function $f$ of a real variable $x$ into a function $\mathcal{T}f$ with as domain a finite set, and the integral kernel is determined by a finite fuzzy partition $\{A_s(x)\mid s=1,2,\ldots,n\}$. Since the domain of $\mathcal{T}f$ is a finite set in this case, one can stack all the evaluations $\left(\mathcal{T}f \right)(s)$ in a vector and interpret the transformation as a function-to-vector transformation.

\subsection{Hyperdimensional computing}
Our work bridges the above-mentioned integral transforms with the field of hyperdimensional computing (HDC)~\cite{kanerva2009hyperdimensional, kleyko2023survey,kleyko2022survey}. Hyperdimensional computing, also known as vector symbolic architectures (VSA), is a highly interdisciplinary field with connections to computer science, electrical engineering, artificial intelligence, mathematics, and cognitive science~\cite{ kleyko2023survey,kleyko2022survey}. Especially in the field of machine learning and data science, hyperdimensional computing has recently witnessed growing interest and an increase in applications as an energy-efficient method~\cite{kleyko2023survey}.

The basic idea of HDC is that objects of any type can be represented by high-dimensional distributed representations, called hyperdimensional vectors. HDC algorithms rely on a set of key vector operations with specific algebraic properties: binding, superposition (also called bundling or aggregation), permutation, and similarity measurement. These operations allow for fast and robust computations. The exact algebraic operations depend on the chosen type of hyperdimensional vector. As hyperdimensional computing largely started as an empirical field in various areas, different types of hyperdimensional vectors have been described and used (e.g., bipolar, binary, ternary, sparse, real-valued, or complex-valued).
Nevertheless, the following four properties are assumed to be essential~\cite{kanerva2009hyperdimensional}:
\begin{itemize}
    \item[(i)] {\em Hyperdimensionality}: the vectors should have a large number of dimensions, e.g.~10,000 or more.
    \item[(ii)] {\em Robustness}: corruption of a small fraction of the vector should not result in a significant loss of information. The result of an HDC algorithm should be tolerant for such component failures. This robustness results from redundant representations.
    \item[(iii)] {\em Holistic or holographic representation}: information should not be locally stored but distributed `equally' over the entire vector. This is very different from the regular representation of data in computers, where specific bits have specific meanings.
    \item[(iv)] {\em Randomness}: vectors should be drawn randomly, typically with its elements independent and identically distributed.
\end{itemize}

These properties take inspiration from the functioning of the brain and allow for the implementation of various aspects of artificial intelligence such as memory, reasoning, and learning. For more details on the hyperdimensional space, we refer to~\cite{kanerva2009hyperdimensional,kleyko2023survey,kleyko2022survey}.

\subsection{Further outline}
This paper introduces a linear operator that transforms functions into hyperdimensional vectors, as defined by the four properties above. We are aware of one recent work where a similar idea of representing functions as hyperdimensional vectors was presented. In~\cite{frady2022computing}, the authors showed the analogy to kernel methods and used kernels compatible with the hyperdimensional ``binding'' operation to map kernel-decomposed functions into hyperdimensional space. In our work, the mapping into hyperdimensional space is more general. We present it as a formal integral transform.

Concretely, in Section~\ref{sec:encoding}, we first provide a concrete, formal approach to the representation of objects as hyperdimensional vectors. To that end, we introduce the function $\D: X \rightarrow \mathbb{R}^D$, called a normalized hyperdimensional encoding, that maps elements $x$ of a universe $X$ into the hyperdimensional space $\mathbb{R}^D$ with $D$ a large number. The components $\D_s(x)$, $s=1,2,\ldots,D$, of this vector-valued function may be seen as orthogonal basis functions, similar to the function $e^{-sx}$ in the Laplace transform
and $A_s(x)$ in the fuzzy transform. 

Section~\ref{sec:transform} introduces the hyperdimensional transform for square-integrable functions as the linear operator $\h : L^2(X) \rightarrow \mathbb{R}^D$. A noteworthy difference with many integral transforms is that this transform is not limited to functions with as domain a real interval but is defined for functions with as domain a more abstract universe $X$.

Also note that, although $D$ is assumed large, a function from the infinite-dimensional space $L^2(X)$ is transformed into a finite-dimensional vector space $\mathbb{R}^D$. Hence, the transform can thus only represent an approximation of the original function. This behaviour is also allowed, for example, for the fuzzy transform. 

The remainder of this work discusses various transform-related properties of the hyperdimensional transform, such as uniqueness (Section~\ref{sec:transform}), the inverse transform (Section~\ref{sec:inversetransform}) and its approximation qualities (Section~\ref{sec:approximationproperties}), and the representation of derivatives, integrals and inner products (Section~\ref{sec:integralsandderivatives}).
In Section~\ref{sec:multiplevariables}, we extend the theory to functions of multiple variables that live in different universes.

In Section~\ref{sec:differentialandintegralequations}, as an application, we illustrate how linear differential equations and linear integral equations can be naturally represented in the hyperdimensional space.
Finally, in Section~\ref{sec:othermethodsandfuturedirections}, we discuss close connections with other integral transforms. We indicate how they differ from
the new hyperdimensional transform and in which types of applications it might serve.

\section{Hyperdimensional encoding}
\label{sec:encoding}
In this section, we provide a concrete, formal approach to the representation of objects as hyperdimensional vectors. Given a universe $X$ that is endowed with a measure, we define a hyperdimensional encoding as a function based upon a stochastic process, mapping elements of $X$ to hyperdimensional vectors. We also introduce a notion of normalization. The corresponding normalized hyperdimensional encoding is the first step towards mapping functions belonging to $L^2(X)$ into a hyperdimensional space.

\begin{definition}\label{encoding}
    Let $(X, \mathcal{A}, \mu)$ be a finite measure space and $\{\Phi(x) \mid x \in X\}$ a stochastic process taking values in a bounded set $S \subset \mathbb{R}$. 
    \begin{itemize}
    \item[(i)] A function $n : X \rightarrow \mathbb{R}_{>0}$ that satisfies
    \begin{equation}
        \label{normalization}
        \int_{x' \in X} \frac{\innerE{ \Phi(x)}{ \Phi(x')}}{n(x)n(x')}{\mathrm d}\mu(x') = 1\,,\quad \text{ for all } x \in X
        \,,
    \end{equation}
    is called a normalization function of the stochastic process. We define the normalized stochastic process as $\left\{ \Delta^{\Phi}(x) := \frac{\Phi(x)}{n(x)} \mid  x \in X\right\}$.
    \item[(ii)]  Consider the vector-valued functions $\varphi : X \rightarrow \mathbb{R}^D$ and $\D : X \rightarrow \mathbb{R}^D$ defined by
    \[
    \varphi(x) := \left[ \varphi_1(x), \varphi_2(x),\ldots, \varphi_D(x) \right]^\mathrm{T}
    \]
    and
    \[
    \D(x) := \left[ \frac{\varphi_1(x)}{n(x)},  \frac{\varphi_2(x)}{n(x)},\ldots,  \frac{\varphi_D(x)}{n(x)} \right]^\mathrm{T}
    \,.
    \]
The $D$ components of these vectors are independent sample functions from the stochastic processes $\Phi$ and $\Delta^{\Phi}$, respectively. 
If $\Delta^{\varphi}$ is Bochner integrable w.r.t.~$\mu$, then we say that $\Delta^{\varphi}$ is a normalized hyperdimensional encoding of $X$ w.r.t.~the stochastic process~$\Phi$. The function $\varphi$ is called the unnormalized hyperdimensional encoding.
\end{itemize}
\end{definition}

\begin{remark}
    \label{bochnerVSlebesque}
    The Bochner integral can be seen as the Lebesgue integral for vector-valued mappings~\cite{bogachev2020real, mikusinski1978bochner}. With the integrand taking values in the vector space $\mathbb{R}^D$,
    integration w.r.t.~$\mu$ should be interpreted componentwise. In 
    Eq.~(\ref{normalization}), the Lebesgue and the Bochner integral interpretations coincide as the integrand takes values in $\mathbb{R}$.
\end{remark}

\begin{remark}
A normalized hyperdimensional encoding $\Delta^{\varphi}$ of $X$ w.r.t.~the stochastic process $\Phi$ can only be defined if a normalization function $n$ can be found. 
The existence of such a function for an arbitrary stochastic process on an arbitrary measure space $(X,\mathcal{A},\mu)$ is not known in general. Examples~\ref{intervalencoding}--\ref{discreteexample} show some (less) obvious solutions for the normalization function on different measure spaces and with different stochastic processes.
\end{remark}

The adjective $\textit{hyperdimensional}$ refers to the fact that the dimensionality $D$ is huge\footnote{$D$ should be `large enough' such that an inner product approximates its
expected values `close enough', which can be quantified using appropriate concentration bounds.}. Dimensionalities of 10,000 dimensions or more are fairly typical~\cite{kanerva2009hyperdimensional}. According to the law of large numbers~\cite{dekking2005modern}, we have
\[
\lim_{D \rightarrow \infty} \inner{\varphi(x)}{\varphi(x')} = \mathbf{E}\left[\Phi(x)\Phi(x')\right] 
\]
and
\[
\lim_{D \rightarrow \infty} \inner{\Delta^{\varphi}(x)}{\Delta^{\varphi}(x')} = \innerE{\Delta^{\Phi}(x)}{\Delta^{\Phi}(x')}
\,.
\]
In the left-hand sides, $\inner{\cdot}{\cdot}$ takes two vectors in $\mathbb{R}^D$ as arguments and represents the Euclidean inner product scaled with the dimensionality $D$.
The expected values
on the right-hand sides also represent inner products but between stochastic variables. We can write
\[
\inner{\Phi(x)}{\Phi(x')} := \mathbf{E}\left[\Phi(x)\Phi(x')\right]
\]
and
\[
\inner{\Delta^{\Phi}(x)}{\Delta^{\Phi}(x')} := \innerE{\Delta^{\Phi}(x)}{\Delta^{\Phi}(x')}
\,.
\]
Depending on the context, either the expected value or the inner product notation can be used.
By construction, a normalized encoding $\Delta^{\varphi}$ exhibits the properties of being robust, holistic, and random: each vector component is an independent random sample, while information is statistically encoded via high-dimensional inner products that approximate expected values. 

\begin{proposition}
    \label{commutationTint}
    Let $\Delta^{\varphi}$ be a normalized 
    hyperdimensional encoding of $X$, then 
    for all $x \in X$ we have
     \arraycolsep=2pt
     \begin{eqnarray*}
        \int_{x' \in X} \inner{ \Delta^{\varphi}(x)}{ \Delta^{\varphi}(x')}{\mathrm d}\mu(x') = \inner{ \Delta^{\varphi}(x)}{\int_{x' \in X}  \Delta^{\varphi}(x'){\mathrm d}\mu(x')} 
        \,.
    \end{eqnarray*}
\end{proposition}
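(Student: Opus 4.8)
The plan is to reduce the claim to the elementary fact that a continuous linear functional commutes with the Bochner integral, applied to the functional ``take the inner product against the fixed vector $\D(x)$''. Concretely, fix $x \in X$ and consider the map $T_x \colon \mathbb{R}^D \to \mathbb{R}$ given by $T_x(v) = \inner{\D(x)}{v}$. Since $\Phi$ takes values in a bounded set and $n(x) > 0$, the vector $\D(x)$ has finite Euclidean norm, so $T_x$ is a bounded linear functional on the finite-dimensional space $\mathbb{R}^D$. By hypothesis $\D$ is Bochner integrable w.r.t.\ $\mu$, so the Bochner integral $\int_{x' \in X}\D(x')\,{\mathrm d}\mu(x') \in \mathbb{R}^D$ is well defined, and the standard property that bounded linear operators pass through the Bochner integral (see, e.g., \cite{mikusinski1978bochner}) gives $T_x\bigl(\int_{x' \in X}\D(x')\,{\mathrm d}\mu(x')\bigr) = \int_{x' \in X} T_x(\D(x'))\,{\mathrm d}\mu(x')$, which is exactly the asserted identity once one observes that $x' \mapsto T_x(\D(x')) = \inner{\D(x)}{\D(x')}$ is the scalar integrand on the left-hand side.

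A second, more hands-on route — closer in spirit to the componentwise description of the Bochner integral in Remark~\ref{bochnerVSlebesque} — is to expand both sides in coordinates. Writing $\inner{\cdot}{\cdot}$ as the Euclidean inner product rescaled by $1/D$, the right-hand side equals $\tfrac{1}{D}\sum_{s=1}^{D}\D_s(x)\bigl(\int_{x' \in X}\D(x')\,{\mathrm d}\mu(x')\bigr)_s = \tfrac{1}{D}\sum_{s=1}^{D}\D_s(x)\int_{x' \in X}\D_s(x')\,{\mathrm d}\mu(x')$, using that the Bochner integral is computed componentwise. For each fixed $s$ the factor $\D_s(x)$ is a constant w.r.t.\ $x'$ and may be pulled inside the scalar Lebesgue integral, and the finite sum over $s$ then interchanges with the integral by linearity of the Lebesgue integral, yielding $\int_{x' \in X}\tfrac{1}{D}\sum_{s=1}^{D}\D_s(x)\D_s(x')\,{\mathrm d}\mu(x') = \int_{x' \in X}\inner{\D(x)}{\D(x')}\,{\mathrm d}\mu(x')$, i.e.\ the left-hand side.

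The only point that needs a word of justification is that every integrand appearing is genuinely integrable: Bochner integrability of $\D$ implies $\int_{x' \in X}\|\D(x')\|\,{\mathrm d}\mu(x') < \infty$, hence each coordinate $\D_s$ is $\mu$-integrable (coordinate projections are bounded linear functionals), and therefore $x' \mapsto \inner{\D(x)}{\D(x')}$ is $\mu$-integrable as a finite linear combination of the $\D_s(x')$. I do not expect any real obstacle here; the content of the proposition is just that the finite-dimensional hyperdimensional inner product is linear and continuous and so commutes with integration against $\mu$ — essentially a sanity check ensuring that the inner product and the Bochner integral interact as one would hope, which is used repeatedly in the sections that follow.
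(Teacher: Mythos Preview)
Your first route is exactly the paper's proof: invoke the standard fact that a bounded linear operator (here the functional $T_x = \inner{\D(x)}{\cdot}$) commutes with the Bochner integral. The componentwise second route and the integrability remark are welcome elaborations but not required — the paper dispatches the proposition in a single sentence citing the same Bochner-integral property.
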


\begin{proof}
    This result follows from the fact that for any bounded linear operator $\mathcal{T}$ between two Banach spaces and a function $f$ that is Bochner integrable, $\mathcal{T}f$ is Bochner integrable, and $\mathcal{T}\int_{x\in X}f(x)d\mu(x) = \int_{x\in X} \left(\mathcal{T}f\right)(x)d\mu(x)$. This follows directly from the definition of the Bochner integral~\cite{mikusinski1978bochner}.
\end{proof}

\begin{proposition}
    \label{limit}
    Let $\Delta^{\varphi}$ be a hyperdimensional encoding of $X$, then we have
    \arraycolsep=2pt
    \begin{eqnarray*}
      \lim_{D\rightarrow\infty} \int_{x' \in X} \inner{ \Delta^{\varphi}(x)}{ \Delta^{\varphi}(x')}{\mathrm d}\mu(x')
        &=& \int_{x' \in X} \lim_{D\rightarrow\infty} \inner{ \Delta^{\varphi}(x)}{ \Delta^{\varphi}(x')}{\mathrm d}\mu(x')\\
        &=&  \int_{x' \in X} \inner{ \Delta^{\Phi}(x)}{ \Delta^{\Phi}(x')}{\mathrm d}\mu(x') = 1\,.
        \end{eqnarray*}
\end{proposition}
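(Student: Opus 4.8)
The plan is to establish the displayed chain of equalities from left to right. The rightmost equality, $\int_{x'\in X}\inner{\Delta^{\Phi}(x)}{\Delta^{\Phi}(x')}\,{\mathrm d}\mu(x') = 1$ — reading $\inner{\Delta^{\Phi}(x)}{\Delta^{\Phi}(x')}$ as $\innerE{\Delta^{\Phi}(x)}{\Delta^{\Phi}(x')}$ — is literally the normalization identity in Eq.~(\ref{normalization}), so nothing is to be done there. The middle equality is pointwise convergence under the integral sign: as recorded just before the proposition, for each fixed $x,x'\in X$ the law of large numbers gives $\inner{\D(x)}{\D(x')} = \frac1D\sum_{i=1}^D \D_i(x)\D_i(x') \to \innerE{\Delta^{\Phi}(x)}{\Delta^{\Phi}(x')}$ as $D\to\infty$, the summands being i.i.d.\ in $i$. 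So the real content is the first equality: interchanging $\lim_{D\to\infty}$ with $\int_{x'\in X}\cdots\,{\mathrm d}\mu(x')$.

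For that interchange I would invoke the dominated convergence theorem. Fix $x$. Since $\Phi$ takes values in a bounded set $S\subset\mathbb{R}$, put $M:=\sup_{s\in S}|s|<\infty$; then $|\varphi_i|\le M$ on $X$ for every $i$, and by the triangle inequality together with this bound
\[
\bigl|\inner{\D(x)}{\D(x')}\bigr| = \Bigl|\frac1D\sum_{i=1}^D \frac{\varphi_i(x)}{n(x)}\cdot\frac{\varphi_i(x')}{n(x')}\Bigr| \le \frac{M^2}{n(x)\,n(x')} =: g(x')
\]
for every $D$, a majorant independent of $D$. Provided $g\in L^1(X,\mu)$ — i.e.\ $1/n$ is $\mu$-integrable, which holds in particular whenever $n$ is bounded away from $0$, as for the encodings of Section~\ref{sec:encoding} — dominated convergence yields the first equality, and the chain then closes via the pointwise limit and Eq.~(\ref{normalization}).

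The main obstacle is exactly this domination step, namely securing an integrable, $D$-independent majorant for the integrand. An alternative route that trades the dominated convergence theorem for the strong law of large numbers is to first apply Proposition~\ref{commutationTint} and rewrite the left-hand side as $\inner{\D(x)}{v_D}$ with $v_D:=\int_{x'\in X}\D(x')\,{\mathrm d}\mu(x')\in\mathbb{R}^D$, hence as $\frac1D\sum_{i=1}^D W_i$ with $W_i:=\D_i(x)\int_X\D_i(x')\,{\mathrm d}\mu(x')$; the $W_i$ are i.i.d.\ functionals of the independent sample paths, so $\frac1D\sum_{i=1}^D W_i \to \mathbf{E}[W_1]$, and a Fubini–Tonelli argument for the Bochner integral together with Eq.~(\ref{normalization}) identifies $\mathbf{E}[W_1] = \int_X\innerE{\Delta^{\Phi}(x)}{\Delta^{\Phi}(x')}\,{\mathrm d}\mu(x') = 1$. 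Either way, the one genuinely technical point to verify is the integrability condition needed to apply the relevant limit theorem — finiteness of $\int_X n(x')^{-1}\,{\mathrm d}\mu(x')$, equivalently boundedness below of $n$ — which is harmless in the concrete settings of interest.
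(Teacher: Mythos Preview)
Your argument is essentially the paper's own proof: pointwise convergence by the law of large numbers, interchange of limit and integral by dominated convergence, and the final equality from the normalization condition~(\ref{normalization}). If anything you are more careful than the paper, which simply asserts that boundedness of $S$ makes $\inner{\D(x)}{\D(x')}$ bounded on the finite measure space and uses a constant majorant, whereas you correctly isolate the implicit hypothesis that $1/n$ be integrable (equivalently, $n$ bounded away from $0$).
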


\begin{proof}
    Due to the law of large numbers~\cite{dekking2005modern}, we have 
    \[ \lim_{D\rightarrow\infty} \inner{ \Delta^{\varphi}(x)}{ \Delta^{\varphi}(x')} = \inner{\Delta^{\Phi}(x)}{\Delta^{\Phi}(x')}\,.\]
Since $S$ is bounded,  $\inner{ \Delta^{\varphi}(x)}{ \Delta^{\varphi}(x')}$ is also bounded on our finite measure space. Hence, the first equality follows from the dominated convergence theorem~\cite{mikusinski1978bochner}. The last equality follows from the definition of the normalization function.
\end{proof}

We now give some examples of a hyperdimensional representation via a normalized hyperdimensional encoding. In practice, many tasks pertain to a universe $X$ that consists of real values. Intuitively, the inner product $\inner{\varphi(x)}{\varphi(x'))}$ can be related to the similarity between $x$ and $x'$ and should be a decreasing function of $|x-x'|$. 
Different kinds of such representations have been proposed in the field of hyperdimensional computing~\cite{kleyko2022survey}. Typically, some `range size' $\lambda$ is chosen such that $\inner{\varphi(x)}{\varphi(x')}\approx 0$ if $|x-x'|\geq \lambda$. If $|x-x'|<\lambda$, one uses some kind of interpolation: the number of shared entries in $\varphi(x)$ and $\varphi(x')$ increases with decreasing $|x-x'|$. Below, we give a concrete example of such a hyperdimensional representation.

\begin{example}
\label{intervalencoding}
Let $X=[a,b]\subset \mathbb{R}$, $\mathcal{A}$ be the set of all subintervals of $X$ and $\mu$ the Lebesgue measure expressing
the length of each subinterval. We define a stochastic process $\Phi$ taking values in $S=\{-1,1\}$ by the following properties:
\begin{itemize}
    \item[(i)] $\mathbf{E}\left[\Phi(x)\right]=0$, for all $x\in X$;
    \item[(ii)] $\innerE{\Phi(x)}{\Phi(x')} = \max\left(0, 1 - \frac{|x-x'|} {\lambda} \right)$, for all $x,x'\in X$, for some $\lambda>0$.
\end{itemize}
Note that, because of $(i)$, the quantity $\innerE{\Phi(x)}{\Phi(x')} = \inner{\Phi(x)}{\Phi(x')}$ in $(ii)$ represents the covariance between the stochastic variables $\Phi(x)$ and $\Phi(x')$.

A $D$-dimensional sample $\varphi$ of such a process can be constructed. For example, first,
select points $x_k$ with $k=0,1, ... ,n+1$ and $x_{k+1}-x_{k} = \lambda$, such that $x_1,...,x_n \in [a,b]$ and $x_0, x_{n+1} \notin [a,b]$.
Then, map all points $x_k$ into corresponding $D$-dimensional random vectors $r(x_k) \in \mathbb{R}^D$ via $D$ independent Rademacher variables (taking values in $\{-1,1\}$ with equal probability). If $x=x_k$, then assign $\varphi(x) = r(x_k)$; if $x_k < x < x_{k+1}$, then, for each component $\varphi_i(x)$, sample a switching point $t_i$ from the uniform distribution on $]x_k, x_{k+1}[$, and assign $\varphi_i(x)=\varphi_i(x_k)$ if $x<t_i$ and $\varphi_i(x)=\varphi_i(x_{k+1})$ if $x\geq t_i$. Each component of $\varphi(x)$ is thus a piecewise continuous function switching at most $n+1$ times 
between $-1$ and $1$ at random locations, such that the expectation of the inner product between $\varphi(x)$ and $\varphi(x')$ decreases linearly with $|x-x'|$ until they are expected to become uncorrelated at $|x-x'|\geq \lambda$. 
Indeed, $\lim_{D\rightarrow\infty} \inner{\varphi(x)}{\varphi(x')} = \max \left(0, 1 - \frac{|x-x'|} {\lambda} \right)$, which is invariant w.r.t.~translation of the chosen points $x_k$.

Besides a construction for taking a sample of the stochastic process, a normalization function is also required for a concrete normalized encoding $\Delta^{\varphi}$. Finding a solution for the normalization function $n(x)$ corresponds to solving the nonlinear integral equation 
\[1 = \int_{x \in [a,b]} \frac{\max \left( 0, 1-\frac{|x-x'|}{\lambda} \right)}{n(x)n(x')}{\mathrm d}\mu(x)
\,,
\]
which is a special case of the nonlinear integral equation of the Hammerstein type with a singular term at the origin. This type of equation has been shown to have a positive measurable solution. For a proof and conditions, we refer to~\cite{coclite2000positive}. In practice, an approximate solution to the Hammerstein equation is often constructed by the method of successive approximation~\cite{nadir2014numerical}. As an illustration for approximating the function $n(x)$, we use the interval $[0,1]$ and set $\lambda=1/4$. We choose 100 equidistant points to compute and evaluate our approximation for $n(x)$. As an initial guess, we set 
\[
    n_0(x) = \sqrt{\int_{x' \in [a,b]} \max \left( 0, 1-\frac{|x-x'|}{\lambda} \right) {\mathrm d}\mu(x')}
\] 
and in each $i$-th iteration, we compute the function 
\[
\Tilde{1}_i(x) = \int_{x' \in [a,b]} \frac{ \max \left( 0, 1-\frac{|x-x'| }{l} \right)}{n_{i}(x)n_{i}(x')} {\mathrm d}\mu(x')
\]
and update 
\[
n_{i+1}(x) = n_{i}(x)\sqrt{\Tilde{1}_{i}(x)}
\,.
\]
The left and right panels in Figure~\ref{fig:nx} show the functions $\Tilde{1}_i(x)$ and $n_i(x)$ through 10 iterations. 
\begin{figure}[!t]
\centering
\includegraphics[width=0.48\textwidth]{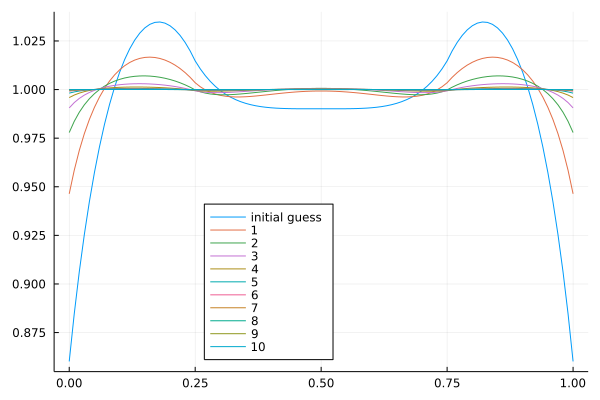}
\includegraphics[width=0.48\textwidth]{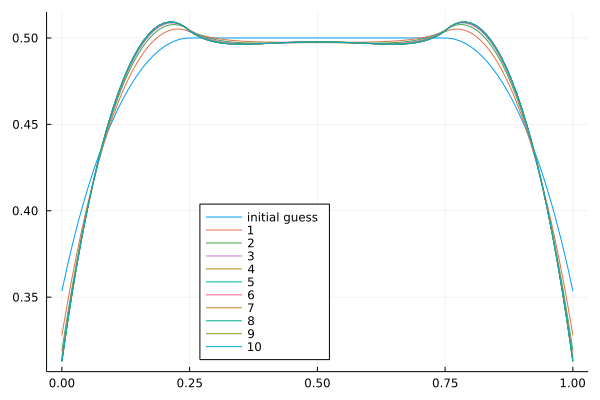}
\caption{The approximate solution of the Hammerstein equation converges, using an iterative approach. Left panel: $\Tilde{1}_i(x)$, right panel: $n_i(x)$.}
\label{fig:nx}
\end{figure}

Using the function $n_{10}(x)$ obtained after 10 iterations, in Figure~\ref{fig:basicfunctions}, we compare the normalized $\inner{\Delta^{\Phi}(x)}{\Delta^{\Phi}(x')}$ (right panel) to the unnormalized $\inner{\Phi(x)}{\Phi(x')}$ (left panel) in function of $x$ for a range of fixed $x'$. In the right panel, for each fixed $x'$, the area below $\inner{\Delta^{\Phi}(x)}{\Delta^{\Phi}(x')}$ is normalized to one.
\begin{figure}[!t]
\centering
\includegraphics[width=0.48\textwidth]{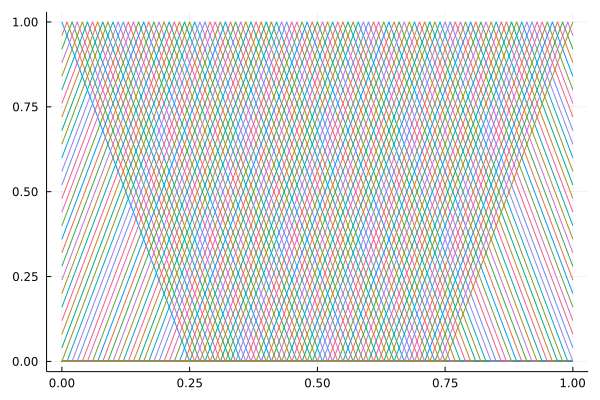}
\includegraphics[width=0.48\textwidth]{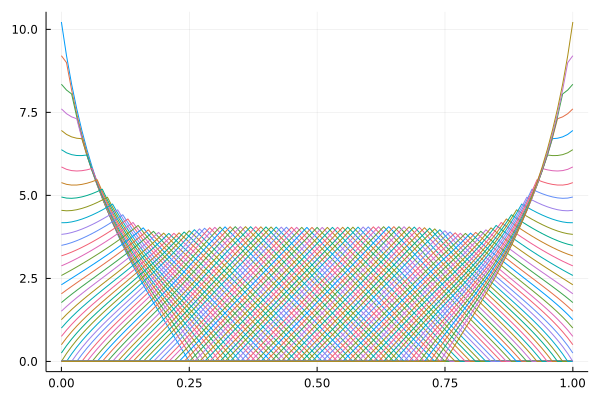}
\caption{The unnormalized $\inner{\Phi(x)}{\Phi(x')}$ (left pannel) and normalized $\inner{\Delta^{\Phi}(x)}{\Delta^{\Phi}(x')}$ (right panel) as a function of $x$ for a range of fixed $x'$. As a normalization function, the approximation $n_{10}(x)$ is used. For each fixed $x'$, the area below $\inner{\Delta^{\Phi}(x)}{\Delta^{\Phi}(x')}$ is approximately normalized to one.}
    \label{fig:basicfunctions}
\end{figure}
\end{example}

In the above example, one may recognize the function $\max \left(0, 1 - \frac{|x-x'|} {\lambda} \right)$ used as a basis function in the fuzzy transform~\cite{perfilieva2006fuzzy}.
\begin{example}
\label{exampleperiodic}
Consider the same measure space $(X,\mathcal{A}, \mu)$ as in Example~\ref{intervalencoding} and a similar stochastic process. Now we add periodic boundary conditions, {\em i.e.},
\begin{itemize}
    \item[(i)] $\mathbf{E}\left[\Phi(x)\right]=0$, for all $x\in X \,;$
    \item[(ii)] $\innerE{\Phi(x)}{\Phi(x')} = \max \left(0, 1 - \frac{d(x,x')} {\lambda} \right)$, for all $x,x'\in X$, with $\lambda=(b-a)/n$ and $n\geq 2$ an integer, and
\[
d(x,x') = \left\{
    \begin{array}{ll}
        |x-x'| &\text{, if }  |x-x'|\leq (b-a)/2 \\[.2cm]
       b-a-|x-x'| &\text{, if }  |x-x'|> (b-a)/2
    \end{array}
    \,.
\right.
\]
\end{itemize}
Note that $\innerE{\Phi(a)}{\Phi(b)}=\inner{\Phi(a)}{\Phi(b)}=1$.  

Indeed, a $D$-dimensional sample $\varphi$ of such a process can be constructed analogously to Example~\ref{intervalencoding}, with the difference that for the points $x_0,x_{n+1} \notin [a,b]$, we set $r(x_0)=r(x_n)$ and $r(x_{n+1})=r(x_1)$ and also, we set $\varphi(x_0+x)=\varphi(x_n+x)$ for $x\in\, ]0,\lambda[$.

We obtain a constant as a solution for the normalization function $n(x)$ since
    \[
        n^2(x) = \int_{x' \in [a,b]} \inner{ \Phi(x)}{ \Phi(x')} {\mathrm d}\mu(x') 
    = \lambda \,,
    \]
for all $x \in X$, and can define a concrete normalized hyperdimensional encoding as $\Delta^{\varphi}=\varphi/\sqrt{\lambda}$.
\end{example}

Note that, if there are no boundaries, the normalization function behaves as a constant. Similarly, in Example~\ref{intervalencoding}, away from the boundaries (w.r.t.\ the `range size' $\lambda=1/4$), the normalization function behaves as a constant.
To construct encodings for higher-dimensional spaces, 
we refer to Section~\ref{sec:multiplevariables}.
For completeness, we also illustrate the encoding of a finite set by means of the example below that assumes a simple structure.

\begin{example} \label{discreteexample}
    Let $X = U\times V \times W$ with $U,V,W$ discrete sets,  $\mathcal{A}$ the power set of $X$ and $\mu$ the counting measure. 
    We define a stochastic process $\Phi$ by the following properties:
    \begin{itemize}
        \item[(i)]  $\mathbf{E}\left[\Phi(x)\right]=0$, for all $x\in X \,;$
        \item[(ii)] $\innerE{\Phi(x)}{\Phi(x')} = \frac{1}{3} \left( \delta_{x_1,{x'}_1}+\delta_{x_2,{x'}_2}+\delta_{x_3,{x'}_3} \right)$, for all $x=(x_1,x_2,x_3),x'=(x'_1,x'_2,x'_3) \in X \,.$
    \end{itemize}
    Here, $\delta$ denotes the Kronecker delta function.
    
    A $D$-dimensional sample $\varphi$ of such a process can indeed be constructed. For example, first, map all elements $x_1\in U$, $x_2\in V$, $x_3\in W$ into corresponding $D$-dimensional random vectors $r_1(x_1), r_2(x_2), r_3(x_3) \in \mathbb{R}^D$ via $D$ Rademacher variables, taking values in $\{-1,1\}$ with equal probability. Then define $\varphi(x) =  \frac{1}{\sqrt{3}} \left( r_1(x_1)+ r_2(x_2)+r_3(x_3) \right)$, and, indeed $\lim_{D\rightarrow\infty} \inner{\varphi(x)}{\varphi(x')} = \frac{1}{3} \left( \delta_{x_1,{x'}_1}+\delta_{x_2,{x'}_2}+\delta_{x_3,{x'}_3} \right)$.
    As normalization function $n(x)$, we obtain a constant solution by counting:
    \arraycolsep=2pt
    \begin{eqnarray*}
        n^2(x) &=& \int_{x' \in X} \inner{ \Phi(x)}{ \Phi(x')} {\mathrm d}\mu(x') \\[.4cm]
        &=& \int_{x' \in X}  \frac{1}{3} \left( \delta_{x_1,{x'}_1}+\delta_{x_2,{x'}_2}+\delta_{x_3,{x'}_3} \right) {\mathrm d}\mu(x') \\[.4cm]
         &= & 1 + \frac{2}{3} \Bigl(\left(|U|-1\right)+\left(|V|-1\right)+\left(|W|-1\right) \Bigr)\\[.2cm]
        & &+  \frac{1}{3}\Bigl( (|U|-1)(|V|-1) + (|U|-1)(|W|-1) + (|V|-1)(|W|-1) \Bigr)\\[.2cm]
    &=& \frac{1}{3}\Bigl( |U||V| + |V||W| + |W||U| \Bigr) 
    \end{eqnarray*}
    for all $x \in X$. With the concrete construction for the sample $\varphi$ and the normalization constant $n$, we can define a normalized hyperdimensional encoding 
    as $\Delta^{\varphi}=\varphi/n$.

In case $U,V,W = \{0,1\}$, one may recognize the simple matching coefficient in $\innerE{\Phi(x)}{\Phi(x')} = \frac{1}{3} \left( \delta_{x_1,{x'}_1}+\delta_{x_2,{x'}_2}+\delta_{x_3,{x'}_3} \right)$, used for expressing the similarity between objects with binary attributes~\cite{de2009transitivity,verma2019new}.
\end{example}

In the field of hyperdimensional computing, approaches for constructing hyperdimensional representations have been described for many more universes $X$, representing different types of data structure such as graphs, images, sequences, symbols, sets, trees, and other structures~\cite{plate1995holographic,kleyko2022survey}. These approaches all have a random aspect in common. Our main contribution in this section is the formalization as a stochastic process with expected values and the notion of normalization, which is needed to formulate a proper transform in the next section.

\section{The hyperdimensional transform}
\label{sec:transform}
In this section, we use the normalized hyperdimensional encoding $\Delta^{\varphi}: X \rightarrow \mathbb{R}^D$ to construct the linear operator $\h$ that transforms functions from $L^2(X)$ into $\mathbb{R}^D$. The components $\Delta^{\varphi}_i$ that result from independent samples of a stochastic process will serve as orthogonal basis functions on which a function $f$ is projected.

We adhere to the following assumptions throughout this section: $(X, \mathcal{A}, \mu)$ is a finite measure space; $\{\Phi(x) \mid x \in X\}$ is a stochastic process taking values in a bounded set $S \subset \mathbb{R}$; and $\Delta^{\varphi}$ is a normalized hyperdimensional encoding of $X$ w.r.t.~the stochastic process $\Phi$. These are also the standing assumptions for the remainder of this work unless indicated differently.

\begin{definition}
\label{def:transform}
    The hyperdimensional transform w.r.t.~$\Delta^{\varphi}$ is defined by the linear operator $\mathcal{H}^{\Delta^{\varphi}} : L^2(X) \rightarrow \mathbb{R}^D $ as:
    \[
    F = \mathcal{H}^{\Delta^{\varphi}}f := \int_{x\in X} f(x)\Delta^{\varphi}(x){\mathrm d}\mu(x)    \,,
    \]
    and maps any real-valued function $f$ in $L^2(X)$ to a $D$-dimensional real vector~$F$. The product in the integrand
    is the product of the vector $\Delta^{\varphi}(x) \in \mathbb{R}^D$ and the scalar quantity $f(x) \in \mathbb{R}$. 
    The integral should again be interpreted as the Bochner integral w.r.t.\ the Lebesgue measure~$\mu$.
\end{definition}

Note that the operator $\mathcal{H}^{\Delta^{\varphi}}$ is linear, {\em i.e.},
     \[
        \mathcal{H}^{\Delta^{\varphi}}(\alpha f + \beta g) =  \alpha \mathcal{H}^{\Delta^{\varphi}}f +  \beta \mathcal{H}^{\Delta^{\varphi}}g
        \,,
    \]
     with $\alpha$ and $\beta$ two scalars and $f$ and $g$ two real-valued functions in $L^2(X)$. This linearity allows us to extend the hyperdimensional transform from real-valued functions to complex-valued ones, using $f = f_{\textit{real}} + \mathbf{i} f_{\textit{im}}$.

\begin{remark}    \label{finL1}
\noindent
\begin{itemize}
\item[(i)] The operator $\mathcal{H}^{\Delta^{\varphi}} : L^2(X) \rightarrow \mathbb{R}^D $ maps from one Hilbert space to another. Note that the Hilbert space $L^2(X)$ is infinite-dimensional, while $\mathbb{R}^D$ is finite-dimensional and we assume $D$ to be large.
\item[(ii)] One of the assumptions throughout this section is that the measure space $(X,\mathcal{A},\mu)$ is finite, in which case a function $f$ in $L^2(X)$ also belongs to $L^1(X)$ and thus is Lebesgue integrable. 
\item[(iii)]     Since the $i$-th component $\Delta^{\varphi}_i$ of the vector-valued function $\Delta^{\varphi}$) is bounded and Lebesgue integrable (see Definition~\ref{encoding} and Remark~\ref{bochnerVSlebesque}),
and $f$ is Lebesgue integrable (see (ii)), the product $f \Delta^{\varphi}_i$ is also Lebesgue integrable. Hence, the vector-valued function $f \Delta^{\varphi}$ is Bochner integrable and the transform is well-defined.
\end{itemize}
\end{remark}

\begin{remark}
\label{rem:distributions}
The definition of the hyperdimensional transform for functions can be extended with one for measures.
Let $\mathcal{M}(X,\mathcal{A})$ denote the space of all real-valued measures on the measurable space $(X,\mathcal{A})$.
Then the hyperdimensional transform of a measure $\mu' \in \mathcal{M}(X,\mathcal{A})$ w.r.t.\ $\Delta^{\varphi}$ can be defined by the linear operator $\mathcal{H}^{\Delta^{\varphi}}_* : \mathcal{M}(X,\mathcal{A}) \rightarrow \mathbb{R}^D $ as:
\[
    M' = \mathcal{H}^{\Delta^{\varphi}}_*\mu' = \int_{x\in X}\Delta^{\varphi}(x){\mathrm d}\mu'(x)\,.
\]
Instead of weighing the integration with a function $f$, now a measure $\mu'$ is used. 
This extension allows for the interpretation $\Delta^{\varphi}(x) = \mathcal{H}^{\Delta^{\varphi}}_*\delta_x$ with $\delta_x$ the Dirac measure peaked at $x$. If $\mu'=\mu$, then $ \mathcal{H}^{\Delta^{\varphi}}_* \mu = \mathcal{H}^{\Delta^{\varphi}}1_X$, with $1_X$ the simple function mapping all elements of $X$ to $1$.
\end{remark}


The following theorem expresses that the hyperdimensional transform is unique, {\em i.e.}, the transform is injective if the function $\inner{\D(\cdot)}{\D(\cdot)} : X \times X \rightarrow \mathbb{R}$ is a strictly positive definite kernel function. For completeness, we first recall the definition of 
such kernel function~\cite{muandet2017kernel}.

\begin{definition}
\label{def:posdefkernel}
    A function $k : X \times X \rightarrow \mathbb{R}$ is a positive definite kernel function if it is symmetric, i.e., $k(x^1,x^2)=k(x^2,x^1)$, and any Gram matrix is positive definite, i.e.,
    \[
\sum_{i=1}^n \sum_{j=1}^n c_i c_j k(x^i, x^j) \geq 0\,,
\]
for any $n\in \mathbb{N}$, any $x^1,\ldots, x^n \in X$ and any $c_1,\ldots, c_n \in \mathbb{R}$~\cite{muandet2017kernel,shawe2004kernel}. The function is said to be strictly positive definite if the equality 
\[
\sum_{i=1}^n \sum_{j=1}^n c_i c_j k(x^i, x^j) = 0
\]
implies $c_1=c_2=\ldots=c_n=0$.
\end{definition}

This definition is equivalent to saying that the eigenvalues of any Gram matrix of a positive definite kernel function $k$, {\em i.e.},~any $n\times n$ matrix $K$ with $K_{ij}=k(x^i,x^j)$ for any $x^1,\ldots,x^n$ and any $n\in\mathbb{N}$, are non-negative. For a strictly positive definite kernel function, the eigenvalues of the Gram matrix must be strictly positive~\cite{muandet2017kernel,shawe2004kernel}.

\begin{theorem}
       Let $F=\h f$ and $G=\h g$ be the hyperdimensional transforms of $f, g \in L^2(X)$. If the function $\inner{\D(\cdot)}{\D(\cdot)} : X \times X \rightarrow \mathbb{R}$ is a strictly positive definite kernel function, then $F=G$ implies $f=g$.
\end{theorem}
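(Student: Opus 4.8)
The plan is to reduce the claim to a statement about the kernel $k(x,x') := \inner{\D(x)}{\D(x')} = \innerE{\D(x)}{\D(x')}$. By linearity of $\h$, it suffices to show that $\h h = \mathbf 0$ implies $h = 0$ in $L^2(X)$, applied to $h = f - g$. So let $h \in L^2(X)$ with $F := \h h = \int_{x \in X} h(x) \D(x)\,{\mathrm d}\mu(x) = \mathbf 0 \in \mathbb R^D$.

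First I would use the bounded linear functional $F \mapsto \inner{F}{F}$ on $\mathbb R^D$ together with Proposition~\ref{commutationTint}-style reasoning (pulling the Bochner integral through a bounded linear map) to rewrite the squared norm of $F$ as a double integral. Concretely, $0 = \inner{F}{F} = \inner{\int h(x)\D(x)\,{\mathrm d}\mu(x)}{\int h(x')\D(x')\,{\mathrm d}\mu(x')}$, and moving both integrals outside the (bilinear, bounded) inner product gives
\[
0 = \int_{x \in X}\int_{x' \in X} h(x)\,h(x')\, \inner{\D(x)}{\D(x')}\,{\mathrm d}\mu(x')\,{\mathrm d}\mu(x)
= \int_{x\in X}\int_{x'\in X} h(x) h(x') k(x,x')\,{\mathrm d}\mu(x')\,{\mathrm d}\mu(x)\,.
\]
The use of Fubini/Tonelli here is justified because $k$ is bounded (the process takes values in a bounded set and $n$ is bounded away from $0$ in the relevant sense), $\mu$ is finite, and $h \in L^1(X)$, so the double integral is absolutely convergent. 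This expresses the vanishing of a ``continuous quadratic form'' associated with the kernel.

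The main step is then to argue that strict positive definiteness of $k$ in the pointwise sense of Definition~\ref{def:posdefkernel} (finite Gram matrices) forces this integral quadratic form to vanish only when $h = 0$ $\mu$-a.e. The cleanest route is via the integral operator $T_k : L^2(X) \to L^2(X)$, $(T_k h)(x) = \int k(x,x') h(x')\,{\mathrm d}\mu(x')$: since $k$ is bounded and $\mu$ finite, $k \in L^2(X \times X)$, so $T_k$ is a self-adjoint, positive, compact (Hilbert--Schmidt) operator, and the above identity says $\inner{h}{T_k h}_{L^2} = 0$. Positivity and self-adjointness give $T_k^{1/2} h = 0$, hence $T_k h = 0$, so $h \in \ker T_k$. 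It remains to show $\ker T_k = \{0\}$, i.e. that $T_k$ is injective — equivalently that $0$ is not an eigenvalue, equivalently (by Mercer-type expansion) that $k$ is strictly positive definite as a kernel. The bridge I would make explicit is: if $h \neq 0$ lies in $\ker T_k$, then one can produce, by approximating $h$ by simple functions supported on small sets and evaluating, a finite set of points $x^1,\dots,x^n$ and coefficients $c_1,\dots,c_n$, not all zero, with $\sum_{i,j} c_i c_j k(x^i,x^j) = 0$, contradicting strict positive definiteness; alternatively, invoke the standard fact (e.g.\ from \cite{muandet2017kernel}) that a continuous/bounded strictly positive definite kernel on a finite measure space yields a strictly positive integral operator.

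I expect the main obstacle to be precisely this last bridge — passing from the pointwise, finite-matrix notion of strict positive definiteness to injectivity of the integral operator — since naively $\sum c_i c_j k(x^i,x^j) = 0$ only controls finite point configurations while $\inner{h}{T_k h} = 0$ is an integral condition. The slickest fix is to note that $\inner{h}{T_k h} = 0$ with $h$ continuous would immediately yield point-mass contradictions, and then reduce the general $L^2$ case to the continuous case by density together with continuity of $k$ (which holds in the examples, e.g.\ Example~\ref{intervalencoding}); if one does not want to assume continuity of $k$, one instead appeals directly to the spectral characterization (strict positive definiteness $\Leftrightarrow$ all eigenvalues of the Hilbert--Schmidt operator $T_k$ strictly positive) already recalled after Definition~\ref{def:posdefkernel}, which makes $\ker T_k = \{0\}$ immediate. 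Either way, $h = 0$ in $L^2(X)$, hence $f = g$.
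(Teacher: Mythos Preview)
Your approach matches the paper's exactly: compute $\lVert F-G\rVert^2$ as the double integral $\int\int (f-g)(x)(f-g)(x')\inner{\D(x)}{\D(x')}\,{\mathrm d}\mu\,{\mathrm d}\mu$ via Proposition~\ref{commutationTint}, then invoke strict positive definiteness of the kernel to conclude $f=g$; the paper simply cites \cite{muandet2017kernel,shawe2004kernel} for that final implication, whereas you (commendably) unpack it through the Hilbert--Schmidt operator $T_k$ and correctly flag the passage from the finite-point Definition~\ref{def:posdefkernel} to injectivity of $T_k$ as the only genuinely delicate step. One small slip worth fixing: the equality $\inner{\D(x)}{\D(x')} = \innerE{\D(x)}{\D(x')}$ you assert at the outset is not correct for finite $D$ --- the left-hand side is the sample inner product in $\mathbb R^D$, not an expectation --- but this does not affect your argument, since the hypothesis of the theorem is strict positive definiteness of the former.
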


\begin{proof}
Note that 
    \[
    F - G = \int_{x\in X} \left(f(x)-g(x)\right)\Delta^{\varphi}(x){\mathrm d}\mu(x)
    \,.
    \]
Using Proposition~\ref{commutationTint}, we have 
    \arraycolsep=2pt
    \begin{eqnarray*}
    \lVert F-G \rVert^2
        &=& \inner{F-G}{F-G} \\
        &=&  \int_{x\in X} \int_{x'\in X} \big(f(x)-g(x) \big) \big(f(x')-g(x')\big) \inner{\D(x)}{\D(x')} {\mathrm d}\mu(x){\mathrm d}\mu(x')
        \,.
        \end{eqnarray*}
The strictly positive definiteness of $\inner{\D(\cdot)}{\D(\cdot)} : X \times X \rightarrow \mathbb{R}$ then implies that if 
$\lVert F-G \rVert^2=0$, then also $f=g$~\cite{muandet2017kernel,shawe2004kernel}. 
\end{proof}

\begin{example}
    The function $\inner{\Phi(x)}{\Phi(x')} = \max \left( 0, 1-\frac{|x-x'|}{\lambda} \right)$, for $x,x'\in [a,b]$, introduced in Example~\ref{intervalencoding}, is positive definite.
    Indeed, the Fourier transform of the non-negative function $\frac{2-2\cos(\omega)}{\omega^2}$ in the frequency domain is proportional to the function $\max \left( 0, 1-|x-x'| \right)$ with $\lambda=1$ in the spatial domain, such that positive definiteness follows from Bochner's theorem~\cite{muandet2017kernel}.
    To obtain a unique transform,
    a strictly positive definite function can always be constructed by using a modified stochastic process $\Phi_{\varepsilon}$ with a small probability $\varepsilon$ for which the outcome of the stochastic process is a completely random function, such that
    \[
    \inner{\Phi_{\varepsilon}(x)}{\Phi_\varepsilon(x')} = (1-\varepsilon) \inner{\Phi(x)}{\Phi(x')} + \varepsilon \delta_{x,x'}
    \,.
    \]
    Since $\sum_{i=1}^n \sum_{j=1}^n c_i c_j \delta_{x^i,x^j} = 0$ implies $c_1=\ldots=c_n=0$, even the slightest $\varepsilon$ turns a positive definite function  $\inner{\Phi(x)}{\Phi(x')}$ into a strictly positive definite function $ \inner{\Phi_{\varepsilon}(x)}{\Phi_\varepsilon(x')}$.
\end{example}

\section{The inverse hyperdimensional transform}
\label{sec:inversetransform}

This section introduces the inverse hyperdimensional transform, a linear operator $\hinv$ that transforms vectors in $\mathbb{R}^D$ back into $L^2(X)$. The back-transformed function $\Tilde{f}=\hinv \h f$ is to be understood as an approximation of the original function and not its exact recovery, though it can be an arbitrarily close approximation.

\begin{definition}
\label{def:inversetransform}
 
 The inverse hyperdimensional transform w.r.t.~$\Delta^{\varphi}$ is defined by the linear operator $\Tilde{\mathcal{H}}^{\Delta^{\varphi}} : \mathbb{R}^D \rightarrow L^2(X)$ as:
    \[
        \Tilde{\mathcal{H}}^{\Delta^{\varphi}}F := \inner{F}{\Delta^{\varphi}(\cdot)}\,,
    \]
    with function evaluation
    \[
       \left(\Tilde{\mathcal{H}}^{\Delta^{\varphi}}F\right)(x) = \inner{F}{\Delta^{\varphi}(x)}
       \,,
    \]
    and maps any $D$-dimensional vector $F$ to a real-valued function in $L^2(X)$.
\end{definition}

\begin{remark}
    The function $\Tilde{f} = \Tilde{\mathcal{H}}^{\Delta^{\varphi}}F$ is indeed an element of $L^2(X)$ for all $F\in \mathbb{R}^D$. Since $\Delta^{\varphi}$ is Bochner integrable, we have that $\Tilde{f} = \inner{F}{\D(\cdot)}$ is Lebesque integrable (see Remark~\ref{commutationTint} and the proof of Proposition~\ref{commutationTint}) and since $\Tilde{f}$ is bounded because $\Delta^{\varphi}$ is bounded, $\Tilde{f}$ is also square Lebesgue integrable.
\end{remark}

\begin{remark}
    The back-transformed function $\Tilde{f} = \Tilde{\mathcal{H}}^{\Delta^{\varphi}} {\mathcal{H}}^{\Delta^{\varphi}} f$ does not yield the original function $f$ but
    an approximation thereof. More specifically, based on Proposition~\ref{commutationTint}, we have: 
    \arraycolsep=2pt
    \begin{eqnarray*}
        \Tilde{f}(x) &=& \left( \hinv \h f \right)(x)\\
           &=& \int_{x'\in X} f(x')\inner{\Delta^{\varphi}(x)}{\Delta^{\varphi}(x')}{\mathrm d}\mu(x')
        \,,
    \end{eqnarray*}
        which can be interpreted as a smoothened version of the original function, according to the kernel function $\inner{\Delta^{\varphi}(\cdot)}{\Delta^{\varphi}(\cdot)}$.
\end{remark}

\begin{remark}
    \label{function1exact}
    With $1_X$ the simple function mapping all elements of $X$ to $1$, we have:
    \[
        \Tilde{1}_X(x) = \left(\hinv \h 1_X \right)(x) = \int_{x'\in X} \inner{\Delta^{\varphi}(x)}{\Delta^{\varphi}(x')}{\mathrm d}\mu(x')
        \,,
    \]
    which converges to $1_X$ for $D\rightarrow\infty$, expressing the normalization requirement of $\Delta^{\varphi}$ (Proposition~\ref{limit}, Definition~\ref{encoding}).
\end{remark}

\begin{example}
\label{exampledifferentD}
    We perform a brief experiment in which $\Tilde{f} = \hinv \h f$ can be compared to $f$.
    We use the normalized encoding of an interval introduced in Example~\ref{intervalencoding}, and set the interval $X=[0,1]$, $\lambda=1/20$, the normalization function $n=n_{10}$, obtained via 10 iterations, and $f : x \mapsto x \sin(10x)$. The results for $D=5000$, $10,000$, and $50,000$ are shown in Figure~\ref{examplerecovery}.

    \begin{figure}
        \centering        
        \includegraphics[width=0.65\textwidth]{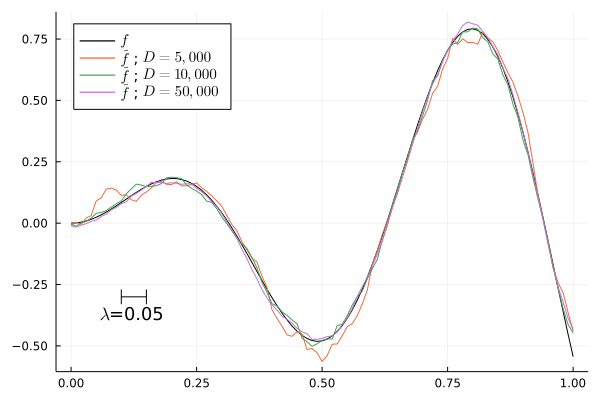}
        \caption{Comparison of the function $f: x \mapsto x\sin(10x)$ and $\Tilde{f}=\hinv \h f$ for different dimensionalities $D$, using the normalized hyperdimensional encoding $\D$ from Example~\ref{intervalencoding} with $\lambda=1/20$.}
        \label{examplerecovery}
    \end{figure}
\end{example}

\section{Approximation properties of the inverse hyperdimensional transform}
\label{sec:approximationproperties}
In this section, we additionally assume that $(X,d)$ is a metric space such that we can speak of continuous functions.
We describe some approximation properties of $\hinv \h f$ in the limit $D\rightarrow\infty$. Following the law of large numbers~\cite{dekking2005modern}, this limit is the expected value of $\hinv \h f$. Note that the expected absolute difference
of $\inner{\D(x)}{\D(x')}$ and its limit $\inner{\Delta^{\Phi}(x)}{\Delta^{\Phi}(x')}$ for $D\rightarrow \infty$ scale as $1/\sqrt{D}$. First, we introduce the notion of a length scale of an encoding.

\begin{definition}
\label{def:lengthscale}
    Let $(X,d)$ be a metric space and $\{\Phi_l(x) \mid x \in X\}$ a stochastic process, parameterized by $l\in\mathbb{R}_{>0}$, taking values in a bounded set $S \subset \mathbb{R}$, and let $\Delta^{\varphi_l}$ be a normalized hyperdimensional encoding of $X$ w.r.t.~the stochastic process $\Phi_l$. We say that the parameter $l$ is a length scale if for all $x,x'\in X$
    \begin{itemize}
        \item[(i)] $\innerE{\Delta^{\Phi_l}(x)}{\Delta^{\Phi_l}(x')} >  0$, if $d(x,x')< l$;
        \item[(ii)] $\innerE{\Delta^{\Phi_l}(x)}{\Delta^{\Phi_l}(x')} =  0$, if $d(x,x')\geq l$.
    \end{itemize}
    Here, note that $\innerE{\Delta^{\Phi_l}(x)}{\Delta^{\Phi_l}(x')} = \inner{\Delta^{\Phi_l}(x)}{\Delta^{\Phi_l}(x')} = \lim_{D\rightarrow\infty} \inner{\Delta^{\varphi_l}(x)}{\Delta^{\varphi_l}(x')}$.
  \end{definition}    
  
The length scale $l$ thus is a parameter that expresses over which distance the random variables $\Delta^{\Phi_l}(x)$ and $\Delta^{\Phi_l}(x')$ can be (positively) correlated.

\begin{theorem}\label{arbitrarilyclose}
Let $\Delta^{\varphi_l}$ be a normalized hyperdimensional encoding of $X$ parameterized by a length scale $l\in\mathbb{R}_{>0}$.
Let $f:X \rightarrow\mathbb{R}$ be a function in $L^2(X)$ that is continuous at $x \in X$. Then, for any $\varepsilon > 0$, there exists a length scale $l>0$ 
such that 
    \[
    |f(x) - \tilde{f}(x)| \leq \varepsilon \,,
    \]
    with $\tilde{f} = \lim_{D\rightarrow \infty} \Tilde{\mathcal{H}}^{\Delta^{\varphi_l}}\mathcal{H}^{\Delta^{\varphi_l}}f$.
\end{theorem}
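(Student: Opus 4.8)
The plan is to use the explicit formula from the remark following Definition~\ref{def:inversetransform}, namely
\[
\tilde{f}(x) = \lim_{D\to\infty}\left(\hinv \h f\right)(x) = \int_{x'\in X} f(x')\,\inner{\Delta^{\Phi_l}(x)}{\Delta^{\Phi_l}(x')}\,{\mathrm d}\mu(x')\,,
\]
where I have already passed the limit $D\to\infty$ through the integral, which is justified by the dominated convergence theorem exactly as in the proof of Proposition~\ref{limit} (the integrand is bounded uniformly in $D$ on a finite measure space). The key structural fact is that the kernel $K_l(x,x') := \inner{\Delta^{\Phi_l}(x)}{\Delta^{\Phi_l}(x')}$ is, by the normalization requirement (Eq.~(\ref{normalization}) and Proposition~\ref{limit}), a probability-density-like object: $\int_{x'\in X} K_l(x,x')\,{\mathrm d}\mu(x') = 1$ for every $x$. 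Moreover, by the length-scale property (Definition~\ref{def:lengthscale}), $K_l(x,x') = 0$ whenever $d(x,x')\geq l$, so the integration is effectively restricted to the ball $B(x,l) := \{x' : d(x,x') < l\}$.

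The main step is then a standard approximate-identity / mollifier argument. I would write
\[
f(x) - \tilde{f}(x) = \int_{x'\in X} \big(f(x) - f(x')\big)\,K_l(x,x')\,{\mathrm d}\mu(x') = \int_{B(x,l)} \big(f(x) - f(x')\big)\,K_l(x,x')\,{\mathrm d}\mu(x')\,,
\]
using $\int K_l(x,x')\,{\mathrm d}\mu(x') = 1$ to insert $f(x)$ inside the integral. Continuity of $f$ at $x$ gives, for the prescribed $\varepsilon>0$, a radius $\rho>0$ such that $|f(x)-f(x')| \leq \varepsilon$ whenever $d(x,x') < \rho$. Now I choose the length scale $l$ to satisfy $l \leq \rho$ (and $l > 0$; such a choice exists by hypothesis, since the family $\{\Phi_l\}$ is parameterized over all of $\mathbb{R}_{>0}$). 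Then on the domain of integration $B(x,l) \subseteq B(x,\rho)$, so
\[
|f(x) - \tilde{f}(x)| \leq \int_{B(x,l)} |f(x)-f(x')|\,K_l(x,x')\,{\mathrm d}\mu(x') \leq \varepsilon \int_{B(x,l)} K_l(x,x')\,{\mathrm d}\mu(x') = \varepsilon\,,
\]
where the last equality again uses the normalization of $K_l$ together with the fact that $K_l(x,x')$ vanishes outside $B(x,l)$. Here I should note that $K_l(x,x') \geq 0$ on $B(x,l)$ by part (i) of Definition~\ref{def:lengthscale}, which is what lets me pull the absolute value inside and bound the whole thing by $\varepsilon$ times a non-negative integral.

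I expect the only delicate point to be the interchange of $\lim_{D\to\infty}$ with the integral defining $\tilde f(x)$, and more precisely making sure that $\tilde f(x) = \int f(x')K_l(x,x'){\mathrm d}\mu(x')$ genuinely holds pointwise at $x$ (this is the content of the remark after Definition~\ref{def:inversetransform} combined with Proposition~\ref{commutationTint}, so it may simply be cited). Everything downstream is the textbook approximate-identity estimate; the length-scale hypothesis has been designed precisely so that the kernel is a compactly-supported (in the metric sense) nonnegative unit-mass kernel, which is exactly the setting in which mollifier convergence at a point of continuity is immediate. One small subtlety worth a sentence: the statement only asserts \emph{existence} of a suitable $l$, not that convergence holds as $l\to 0$, so I do not need any uniformity or any quantitative modulus of continuity — a single $l \leq \rho$ suffices.
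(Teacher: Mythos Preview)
Your proposal is correct and follows essentially the same approach as the paper: write $f(x)-\tilde f(x)$ as $\int (f(x)-f(x'))K_l(x,x')\,{\mathrm d}\mu(x')$ using the unit-mass normalization, restrict to $d(x,x')<l$ using the length-scale support, and invoke continuity of $f$ at $x$ to make the integrand uniformly small. Your write-up is in fact a bit more careful than the paper's, since you explicitly justify the nonnegativity of $K_l$ on $B(x,l)$ (needed to pull the modulus inside) and the interchange of the $D\to\infty$ limit with the integral.
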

    
    \begin{proof}
    Using $\int_{x' \in X} \inner{\Delta^{\Phi_l}(x)}{\Delta^{\Phi_l}(x')} {\mathrm d}\mu(x') = 1$ and Propositions~\ref{commutationTint} and \ref{limit}, we have
    \arraycolsep=2pt
    \begin{eqnarray*}
     \left|f(x) - \tilde{f}(x)\right| 
     && =
        \left|
        \int_{x'\in X} (f(x)-f(x')) \inner{\Delta^{\Phi_l}(x)}{\Delta^{\Phi_l}(x')} {\mathrm d}\mu(x')
         \right| \\[.2cm]
         && \leq
        \int_{x'\in X} \left|f(x)-f(x')\right| \inner{\Delta^{\Phi_l}(x)}{\Delta^{\Phi_l}(x')} {\mathrm d}\mu(x')\\[.2cm]
    &&  \leq 
        \max_{x'\in X, d(x,x')\leq l} \left|f(x) - f(x')\right|
        \,.
    \end{eqnarray*}
    Due to the continuity of $f$ at $x$, for every $\varepsilon > 0$, there exists an $l >0$ such that $|f(x) - f(x')|<\varepsilon$ if $d(x,x')<l$.
\end{proof}

\begin{remark}
    Note that any piecewise continuous function on a real interval $X=[a, b]$ with a finite number of jump discontinuities can
    also be approximated arbitrarily well 
    on the entire domain. Assume that $f$ is continuous on subintervals $X^1, \ldots, X^n$ that form a partition of $X$.
    Now, a normalized hyperdimensional encoding $\D$ of $X$ can be constructed in terms of independent encodings ${\D}^{\,i}$ that each map subinterval $X^i$ to $\mathbb{R}^D$, {\em i.e.},  for $D\rightarrow \infty$, $\inner{{\D}^{\,i}(x)}{{\D}^{\,j}(x')}=0$ for $x,x' \in X^i, X^j$ and $i\neq j$. Analogous to the proof of Theorem~\ref{arbitrarilyclose}, one can easily show that $\Tilde{f} = \hinv \h f$ approximates the piecewise continuous function arbitrarily well by approximating the continuous functions on the different subintervals via independent encodings.
\end{remark}

\begin{example}
    Recall that in both  Examples~\ref{intervalencoding} and~\ref{exampleperiodic},
    it holds that $\inner{\Phi(x)}{\Phi(x')} =$ $ \max \left( 0,1-\frac{d(x,x')}{\lambda} \right)$. One easily verifies that $\lambda$ satisfies the requirements of a length scale $l$ with the following choice of metric $d$: 
    \begin{itemize}
        \item[(i)] Example~\ref{intervalencoding}: 
        $d(x,x')=|x-x'|  \,;$
        \item[(ii)] Example~\ref{exampleperiodic}: 
                \[  d(x,x') = \left\{
                    \begin{array}{ll}
                        |x-x'|    &\text{, if }  |x-x'|\leq (b-a)/2 \\[.2cm]
                       b-a-|x-x'| &\text{, if }  |x-x'|> (b-a)/2
                    \end{array}
                \right.
                \,.
                \]
        \end{itemize}
    Following Theorem~\ref{arbitrarilyclose}, the normalized hyperdimensional encodings in these examples thus allow for approximating any continuous function arbitrarily well in the limit of $D\rightarrow \infty$.
\end{example}
Also functions of discrete variables can be approximated arbitrarily close, as they are always continuous. One can always define an encoding parameterized by a length scale $l$ such that each element is only correlated to itself for $l$ approaching $0$.
As an example, we next extend Example~\ref{discreteexample} by including a length scale $l$ such that the requirements of the definition of a length scale and of Theorem~\ref{arbitrarilyclose} are fulfilled.

\begin{example}
   We define the metric 
   \[
   d(x,x') = 1 - \frac{\delta_{x_1,x'_1} + \delta_{x_2,x'_2} + \delta_{x_3,x'_3}}{3}
   \,,
   \]
   taking only values $0$, $1/3$, $2/3$ and $1$ on the discrete set $X$ from Example~\ref{discreteexample}. We define a modified stochastic process $\Phi_l$ parameterized by $l$ by
\begin{itemize}
        \item[(i)]  $\mathbf{E}\left[\Phi(x)\right]=0$, for all $x\in X \,;$
        \item[(ii)]
        \[
        \innerE{\Phi_l(x)}{\Phi_l(x')} = \left\{
    \begin{array}{ll}
        \delta_{x_1,{x'}_1} \delta_{x_2,{x'}_2}  \delta_{x_3,{x'}_3} &\text{, if }  l<1/3 \\[.2cm]
        \frac{1}{3} \left( \delta_{x_1,{x'}_1}\delta_{x_2,{x'}_2} +\delta_{x_2,{x'}_2}\delta_{x_3,{x'}_3} +\delta_{x_3,{x'}_3}\delta_{x_1,{x'}_1} \right)  &\text{, if }  1/3\leq l < 2/3 \\[.2cm]
        \frac{1}{3} \left(\delta_{x_1,{x'}_1} + \delta_{x_2,{x'}_2}  + \delta_{x_3,{x'}_3} \right) &\text{, if }  2/3 \leq l \\[.2cm]
    \end{array}
\right.  
\,,
\] 
for all $x=(x_1,x_2,x_3)\in X$ and $x'=(x'_1,x'_2,x'_3) \in X$.
    \end{itemize}
A sample can be constructed for all $x=(x_1,x_2,x_3) \in X$ as
\[
\varphi_l(x) = \left\{
    \begin{array}{ll}
        r_1(x_1)r_2(x_2)r_3(x_3) &\text{, if }  l<1/3 \\[.2cm]
        \frac{1}{\sqrt{3}} \left(r_1(x_1)r_2(x_2) + r_2(x_2)r_3(x_3) + r_3(x_3)r_2(x_1) \right)  &\text{, if }  1/3\leq l < 2/3 \\[.2cm] 
        \frac{1}{\sqrt{3}} \left( r(x_1)+r(x_2)+r(x_3) \right)  &\text{, if }  2/3 \leq l \\[.2cm]
    \end{array}
    \,.
\right.
\]  
For $l\geq 2/3$, the stochastic process is unchanged compared to Example~\ref{discreteexample}. For $l<1/3$, the encoding of each element of $X$ is only correlated to itself. One can compute the normalization constants for every $l$ and verify that $l$ is a length scale according to Definition~\ref{def:lengthscale}: random variables are positively correlated if the distance is smaller than $l$ and uncorrelated else.
\end{example}

Now, we provide an indication of the speed of convergence with length scale $l$ when $X\subset \mathbb{R}$ is a real interval, again assuming the limit $D\rightarrow \infty$.

\begin{theorem}
\label{quadraticapprox}
    Let $X=[a,b] \subset \mathbb{R}$ and $\Delta^{\varphi_l}$ a normalized hyperdimensional encoding parametrized by a length scale $l\in\mathbb{R_{>0}}$.
    Let $f$ be twice continuously differentiable
    and $\Tilde{f} = \lim_{D\rightarrow\infty} {\Tilde{\mathcal{H}}^{\Delta^{\varphi_l}}} \mathcal{H}^{\Delta^{\varphi_l}} f$, then
    \[
    \tilde{f}(x) = f(x) + O(l^2)
    \,.
    \]
\end{theorem}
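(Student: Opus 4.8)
The plan is to pass to the limit $D\to\infty$, where (exactly as in the proof of Theorem~\ref{arbitrarilyclose}) the composition $\hinv\h$ becomes a smoothing operator, and then to Taylor-expand. Writing $K_l(x,x'):=\inner{\Delta^{\Phi_l}(x)}{\Delta^{\Phi_l}(x')}$, we have
\[
\tilde f(x)=\int_{x'\in X}f(x')\,K_l(x,x')\,{\mathrm d}\mu(x')\,,
\]
where $\int_{x'\in X}K_l(x,x')\,{\mathrm d}\mu(x')=1$ for every $x$ (normalization, Proposition~\ref{limit}) and $K_l(x,x')=0$ whenever $|x-x'|\ge l$ (length-scale property). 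So the integral is effectively over $(x-l,x+l)\cap[a,b]$, which for a fixed interior point $x\in(a,b)$ and all $l<\min(x-a,b-x)$ equals $(x-l,x+l)$; this is the regime to which the $O(l^2)$ refers.

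First I would expand $f$ about $x$ with Lagrange remainder, $f(x')=f(x)+f'(x)(x'-x)+\tfrac12 f''(\xi_{x'})(x'-x)^2$ with $\xi_{x'}$ between $x$ and $x'$. Substituting and using $\int K_l\,{\mathrm d}\mu=1$ gives
\[
\tilde f(x)-f(x)=f'(x)\,m_1(x)+R_l(x),\qquad m_1(x):=\int_{x'\in X}(x'-x)K_l(x,x')\,{\mathrm d}\mu(x')\,,
\]
with $R_l(x)=\tfrac12\int_{x'\in X} f''(\xi_{x'})(x'-x)^2K_l(x,x')\,{\mathrm d}\mu(x')$. Since $f''$ is continuous on the compact interval $[a,b]$, hence bounded by some $M$, and $K_l\ge 0$ is supported on $|x'-x|<l$,
\[
|R_l(x)|\le\frac{M}{2}\int_{x'\in X}(x'-x)^2K_l(x,x')\,{\mathrm d}\mu(x')\le\frac{M}{2}\,l^2\int_{x'\in X}K_l(x,x')\,{\mathrm d}\mu(x')=\frac{M}{2}\,l^2=O(l^2)\,.
\]

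What remains, and what I expect to be the hard part, is showing $f'(x)\,m_1(x)=O(l^2)$, i.e.\ that the first moment of the kernel is negligible. The mechanism is symmetry: writing $K_l(x,x')=\inner{\Phi_l(x)}{\Phi_l(x')}/\big(n(x)n(x')\big)$ with $\inner{\Phi_l(x)}{\Phi_l(x')}$ a function of $|x-x'|$ (as in Examples~\ref{intervalencoding}--\ref{exampleperiodic}), the map $t\mapsto K_l(x,x+t)$ is even on $(-l,l)$ as soon as $n$ is constant on a $2l$-neighbourhood of $x$; as observed after Example~\ref{exampleperiodic}, this is the case away from the endpoints — there the normalization equation restricted to interior points is solved by an honest constant, and only the boundary forces $n$ to vary. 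For such $x$ the odd factor $x'-x$ integrates to $0$ against the even kernel, so $m_1(x)=0$ and the remainder bound already yields $\tilde f(x)=f(x)+O(l^2)$; in fact one then gets the sharper $\tilde f(x)=f(x)+\tfrac12 f''(x)\,m_2(x)+o(l^2)$ with $m_2(x)=\int(x'-x)^2K_l\,{\mathrm d}\mu=O(l^2)$. The delicate point is precisely that for a generic length-scale encoding $K_l$ need not be translation-symmetric, and near $\partial X$ the normalization function is genuinely non-constant, in which case $m_1(x)$ can degrade to $O(l)$; so the clean $l^2$ rate really rests on the (approximate) symmetry $K_l(x,x+t)=K_l(x,x-t)$ holding in the interior, and making this rigorous for the encoding at hand amounts to verifying that $n$ is locally constant there.
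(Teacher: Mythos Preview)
Your approach is correct and takes a genuinely different route from the paper. The paper does not Taylor-expand at all; instead it applies the trapezium rule with the three nodes $x-l,\,x,\,x+l$ to \emph{both} integrals
\[
\tilde f(x)=\int_{x-l}^{x+l} f(x')\,K_l(x,x')\,{\mathrm d}x'
\qquad\text{and}\qquad
f(x)=\int_{x-l}^{x+l} f(x)\,K_l(x,x')\,{\mathrm d}x'\,.
\]
Because $K_l(x,x\pm l)=0$, each trapezoidal sum collapses to the same quantity $l\,f(x)\,K_l(x,x)$, and the paper asserts an $O(l^2)$ quadrature error in both cases; subtraction then gives $\tilde f(x)-f(x)=O(l^2)$ without ever isolating a first moment.

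Your Taylor/moment argument is the more standard approximation-of-identity proof and is more transparent about what actually produces the rate: the second-moment contribution is bounded by $\tfrac{M}{2}l^2$ unconditionally from the support and normalization alone, while the first moment $m_1(x)$ must vanish, which you correctly trace to the evenness of $K_l(x,x+\cdot)$ together with constancy of $n$ away from $\partial X$. The paper's matching-trapezium trick sidesteps the explicit first-moment computation, but the $O(l^2)$ quadrature error it invokes likewise rests on regularity and effective symmetry of $K_l$ that are not part of Definition~\ref{def:lengthscale} (note $K_l$ has $l$-dependent height $\sim 1/l$, so the textbook trapezium bound needs care and the two half-panel errors must cancel to leading order). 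In that sense you have made explicit an assumption that both proofs need and that the theorem statement leaves implicit.
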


\begin{proof}
        The proof is given for $l<b-a$ and $x \in [a+l, b-l]$, and is analogous when closer to the boundaries.
    In the limit $D\rightarrow\infty$, we have
    \[
    \tilde{f}(x) = \int_{x'\in[a,b]} f(x') \inner{\Delta^{\Phi}(x)}{\Delta^{\Phi}(x')} {\mathrm d}\mu(x')
    \,.
    \]
    Using the trapezium rule for a twice continuously differentiable function with the three points $x-l$, $x$, $x+l$, we have
    \arraycolsep=2pt
    \begin{eqnarray*}
          \tilde{f}(x) = l \bigg[ \frac{f(x-l)\inner{\Delta^{\Phi}(x)}{\Delta^{\Phi}(x-l)}}{2}  + \frac{f(x+l)\inner{\Delta^{\Phi}(x)}{\Delta^{\Phi}(x+l)}}{2}   + f(x)\inner{\Delta^{\Phi}(x)}{\Delta^{\Phi}(x)} \bigg] + O(l^2) \,,
    \end{eqnarray*}
and using $\inner{\Delta^{\Phi_l}(x)}{\Delta^{\Phi_l}(x')}=0$ if $|x-x'|\geq l$, we have
    \[
    \tilde{f}(x) = l f(x)\inner{\Delta^{\Phi}(x)}{\Delta^{\Phi}(x)} + O(l^2)
    \,.
    \]
    Similarly, using $\int_{x' \in [a,b]} \inner{\Delta^{\Phi_l}(x)}{\Delta^{\Phi_l}(x')} {\mathrm d}\mu(x')=1$, we have
    \[
    f(x) = \int_{[a,b]} f(x) \inner{\Delta^{\Phi_l}(x)}{\Delta^{\Phi_l}(x')} {\mathrm d}\mu(x)'
    \,,
    \]
    such that, again  using the trapezium rule,
    \[
    f(x) = l f(x)\inner{\Delta^{\Phi}(x)}{\Delta^{\Phi}(x)} + O(l^2)
    \,,
    \]
    and thus $\tilde{f}(x) = f(x) + O(l^2)$.
\end{proof}

\begin{example}
    Consider the settings of Example~\ref{exampledifferentD}, but instead of varying $D$, we set $D$ large (i.e., at 50,000) and vary $\lambda$ which acts as a length scale $l$ according to Theorems~\ref{arbitrarilyclose} and~\ref{quadraticapprox}. The approximated function for different length scales is shown in Figure~\ref{examplerecoverydifferentl}.
    \begin{figure}
        \centering        
        \includegraphics[width=0.65\textwidth]{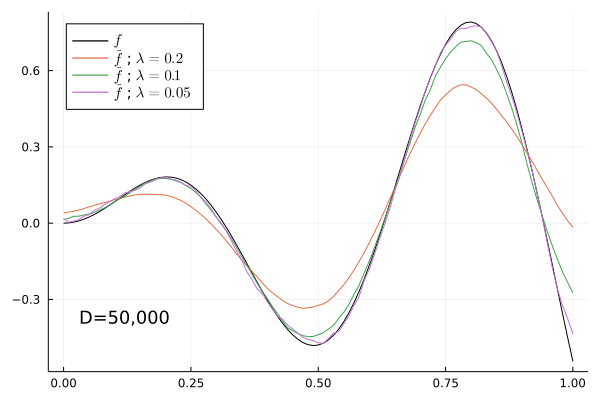}
        \caption{Comparison of the function $f: x \mapsto x\sin(10x)$ and $\Tilde{f}=\hinv \h f$ for different length scales, using the normalized hyperdimensional encoding $\D$ from Example~\ref{intervalencoding} with $D=50,000$. The larger the length scale, the smoother the approximating function $\Tilde{f}$. The smaller the length scale, the closer the approximation is to the original function.}
        \label{examplerecoverydifferentl}
    \end{figure}
\end{example}

\section{Integrals and derivatives}
\label{sec:integralsandderivatives}

In this section, we describe how integrals and derivatives of functions can be expressed in terms of their hyperdimensional transforms. First, we consider integrals, for which no additional assumptions are needed. 

\begin{theorem}
    \label{innerproduct}
     Let $F=\h f$ and $G=\h g$ be the hyperdimensional transforms of $f, g \in L^2(X)$, then
    \[
    \inner{F}{G} 
    =  \int_{x\in X} \tilde{f}(x)g(x){\mathrm d}\mu(x) 
    =  \int_{x\in X} {f}(x)\tilde{g}(x){\mathrm d}\mu(x)
    \,,
    \]
    with $\Tilde{f}=\hinv \h f$ and $\Tilde{g}=\hinv \h g$ the back-transformed functions.
\end{theorem}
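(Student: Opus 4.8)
The plan is to compute the Euclidean inner product $\inner{F}{G}$ by writing out both hyperdimensional transforms as Bochner integrals and interchanging the inner product with the integration. First I would write
\[
\inner{F}{G} = \inner{\int_{x\in X} f(x)\D(x){\mathrm d}\mu(x)}{\int_{x'\in X} g(x')\D(x'){\mathrm d}\mu(x')}\,,
\]
and then, exactly as in the proof of Proposition~\ref{commutationTint}, use that for a fixed vector the map $v \mapsto \inner{v}{\cdot}$ is a bounded linear functional, so it commutes with the Bochner integral. Pulling the two integrals out one at a time yields
\[
\inner{F}{G} = \int_{x\in X}\int_{x'\in X} f(x)\,g(x')\,\inner{\D(x)}{\D(x')}\,{\mathrm d}\mu(x'){\mathrm d}\mu(x)\,.
\]

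Next I would recognize the inner integral. Fixing $x$ and integrating $g(x')\inner{\D(x)}{\D(x')}$ over $x'$ is, by the definition of the inverse transform applied to $G=\h g$ together with the identity
$\tilde{g}(x) = (\hinv\h g)(x) = \int_{x'\in X} g(x')\inner{\D(x)}{\D(x')}{\mathrm d}\mu(x')$
from the remark after Definition~\ref{def:inversetransform}, exactly $\tilde{g}(x)$. Substituting this gives $\inner{F}{G} = \int_{x\in X} f(x)\tilde{g}(x){\mathrm d}\mu(x)$. By the symmetry of the whole expression in the pair $(f,x)\leftrightarrow(g,x')$ — the kernel $\inner{\D(x)}{\D(x')}$ is symmetric — integrating out $x$ first instead yields $\int_{x\in X}\tilde{f}(x)g(x){\mathrm d}\mu(x)$, giving the second equality. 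Alternatively one can obtain it by repeating the argument after interchanging the roles of the two outer integrals, using Tonelli/Fubini, which is justified since the integrand is bounded (because $S$ is bounded) and $\mu$ is finite, and $f,g\in L^2(X)\subset L^1(X)$.

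The main obstacle, such as it is, is purely the measure-theoretic bookkeeping: one must be careful that the double integral is legitimate and that the order of integration may be swapped. This is where the standing assumptions do the work — $\mu$ finite, $\D$ bounded and Bochner integrable, $f,g$ square-integrable hence integrable on a finite measure space — so that $f(x)g(x')\inner{\D(x)}{\D(x')}$ is absolutely integrable on the product space and Fubini applies. Everything else is a direct unwinding of Definitions~\ref{def:transform} and~\ref{def:inversetransform} together with Proposition~\ref{commutationTint}; no new idea beyond the commutation of a bounded linear functional with the Bochner integral is required.
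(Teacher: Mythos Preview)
Your proof is correct and uses essentially the same approach as the paper: commuting the inner product with the Bochner integral via Proposition~\ref{commutationTint} and then invoking the definition of the inverse transform. The paper's version is marginally more direct in that it expands only $G$ and immediately recognizes $\inner{F}{\D(x)}=\tilde{f}(x)$, avoiding the double integral and the Fubini step, but the substance is identical.
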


\begin{proof}
    Using Proposition~\ref{commutationTint}, we have
    \arraycolsep=2pt
    \begin{eqnarray*}
    \inner{F}{G} &=& \inner{F}{ \int_{x\in X} g(x) \Delta^{\varphi}(x) {\mathrm d}\mu(x)}\\[.2cm]
    &=& \int_{x\in X} g(x) \inner{F}{\Delta^{\varphi}(x)} {\mathrm d}\mu(x)\\[.2cm]
    &=& \int_{x\in X} g(x)\tilde{f}(x) {\mathrm d}\mu(x)\,.
  \end{eqnarray*}
\end{proof}

The inner product between functions in $L^2(X)$ corresponds to the Euclidean inner product in $\mathbb{R}^D$.

\begin{corollary}
    Let $f$ be a function in $L^2(X)$ and $F=\h f$ its hyperdimensional transform, and let $1_X$ be the simple function mapping all elements of $X$ to $1$ and $\mathbb{1}_{X} = \h 1_X$ its hyperdimensional transform, then
    \[
    \inner{F}{\mathbb{1}_{X}} 
    = \int_{x\in X} f(x) \Tilde{1}_X(x) {\mathrm d}\mu(x)
     = \int_{x\in X} \Tilde{f}(x) {\mathrm d}\mu(x)
    \]
    and
    \[
    \lim_{D\rightarrow\infty}\inner{F}{\mathbb{1}_{X}} = \int_{x\in X} f(x){\mathrm d}\mu(x)
    \,.
    \]
\end{corollary}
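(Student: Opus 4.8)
The corollary has two parts, and the plan is to derive the first display directly from Theorem~\ref{innerproduct} and the second by a limiting argument. For the first part, I would apply Theorem~\ref{innerproduct} with $g = 1_X$, so that $G = \mathcal{H}^{\Delta^{\varphi}}1_X = \mathbb{1}_X$. The theorem then gives
\[
\inner{F}{\mathbb{1}_X} = \int_{x\in X} f(x)\tilde{1}_X(x)\,{\mathrm d}\mu(x) = \int_{x\in X} \tilde{f}(x)\cdot 1\,{\mathrm d}\mu(x)\,,
\]
where the first equality uses $\tilde{g} = \tilde{1}_X$ in the second expression of Theorem~\ref{innerproduct}'s identity and the second uses that $\tilde{g} = 1$ there — i.e., I read off both the ``$\tilde{f}g$'' form and the ``$f\tilde{g}$'' form of the symmetric identity with the roles chosen appropriately. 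This immediately yields the claimed two-sided equality. Here $\tilde{1}_X = \hinv\h 1_X$ is exactly the function appearing in Remark~\ref{function1exact}, and $\tilde f = \hinv \h f$.

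For the second part, I would take the limit $D \to \infty$ in the identity $\inner{F}{\mathbb{1}_X} = \int_X f(x)\tilde{1}_X(x)\,{\mathrm d}\mu(x)$. By Remark~\ref{function1exact} (equivalently Proposition~\ref{limit}), $\tilde{1}_X(x) \to 1$ pointwise as $D \to \infty$, and since $\Delta^{\varphi}$ takes values built from the bounded set $S$ on a finite measure space, $\tilde{1}_X$ is uniformly bounded; as $f \in L^2(X) \subseteq L^1(X)$ on the finite measure space (Remark~\ref{finL1}(ii)), the product $f\tilde{1}_X$ is dominated by an integrable function. The dominated convergence theorem then lets me pass the limit inside the integral to obtain $\lim_{D\to\infty}\inner{F}{\mathbb{1}_X} = \int_X f(x)\,{\mathrm d}\mu(x)$. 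One technical point to state carefully: $F = \h f$ and $\mathbb{1}_X$ both depend on $D$, so ``$\lim_{D\to\infty}$'' is a limit over a sequence of finite-dimensional inner products; the cleanest route is to keep everything on the integral side, where the $D$-dependence sits only in $\tilde{1}_X$, and apply dominated convergence there.

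The main obstacle, such as it is, is bookkeeping rather than substance: making sure the symmetric identity in Theorem~\ref{innerproduct} is invoked with a consistent assignment of which function plays the role of $f$ and which plays $g$, and being explicit that $\tilde{1}_X$ depends on $D$ so that the dominated convergence step is applied to the correct sequence of integrands. No new analytic input beyond Theorem~\ref{innerproduct}, Remark~\ref{function1exact}, and the finiteness/boundedness already assumed throughout the section is required.
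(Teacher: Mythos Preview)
Your proposal is correct and follows essentially the same approach as the paper: the first display is obtained by specializing Theorem~\ref{innerproduct} to $g=1_X$, and the second by passing to the limit $D\to\infty$ using that $\tilde{1}_X\to 1_X$ (Remark~\ref{function1exact}/Proposition~\ref{limit}). Your version is simply more explicit about the dominated convergence justification, which the paper leaves implicit.
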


\begin{proof}
    The first claim follows directly from Theorem~\ref{innerproduct} by setting $G=\mathbb{1}_{X}$. The second claim follows from the fact that for $D\rightarrow\infty$, the function $\Tilde{1}_X$ approximates $1_X$ perfectly, expressing the normalization of $\Delta^{\Phi}$ (see Remark~\ref{function1exact}).
    Note that, for $D\rightarrow\infty$, we thus also have that $ \int_{x\in X} f(x)d\mu(x) =  \int_{x\in X} \Tilde{f}(x)d\mu(x)$,
    {\em i.e.}, independently of the length scale, the smoothed function $\tilde{f}$ yields the same integral.
\end{proof}

\begin{corollary}
    Let $f$ be a function in $L^2(X)$ and $F=\h f$ its hyperdimensional transform, and let $1_A : X\rightarrow \{0,1\}$ be the simple function mapping all elements of a measurable subset $A \subset X$ to $1$ and all other elements of $X$ to $0$, and $\mathbb{1}_{A} = \h 1_A$ its hyperdimensional transform, then
    \[
    \inner{F}{\mathbb{1}_A} = \int_{x\in A} \tilde{f}(x){\mathrm d}\mu(x)
    \,.
    \]
\end{corollary}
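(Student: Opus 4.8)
The plan is to obtain this as an immediate specialization of Theorem~\ref{innerproduct}, exactly as the preceding corollary was obtained by taking $g=1_X$. First I would note that $1_A$ genuinely lies in $L^2(X)$: it is measurable (since $A\in\mathcal{A}$) and bounded, and because $(X,\mathcal{A},\mu)$ is a finite measure space, every bounded measurable function is square-integrable, so the hyperdimensional transform $\mathbb{1}_A=\h 1_A$ is well-defined by Definition~\ref{def:transform} (cf.~Remark~\ref{finL1}).

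Next I would simply invoke Theorem~\ref{innerproduct} with the choice $g=1_A$, so that $G=\mathbb{1}_A$. That theorem gives
\[
\inner{F}{\mathbb{1}_A} = \int_{x\in X} \tilde{f}(x)\,1_A(x)\,{\mathrm d}\mu(x)\,,
\]
where $\tilde{f}=\hinv\h f$ is the back-transformed function. Since $1_A$ is the indicator of $A$, the integrand vanishes outside $A$ and equals $\tilde{f}(x)$ on $A$, whence $\inner{F}{\mathbb{1}_A} = \int_{x\in A}\tilde{f}(x)\,{\mathrm d}\mu(x)$, which is the claim.

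There is essentially no obstacle here: the only point requiring a line of justification is the membership $1_A\in L^2(X)$, which relies on finiteness of $\mu$; everything else is a direct substitution into Theorem~\ref{innerproduct}. If desired, one could alternatively present the identity as a consequence of the previous corollary applied to the restricted measure space $(A,\mathcal{A}\cap A,\mu|_A)$ together with the linearity of $\h$ on the decomposition $1_X = 1_A + 1_{X\setminus A}$, but the one-step route through Theorem~\ref{innerproduct} is the cleanest and is the one I would write up.
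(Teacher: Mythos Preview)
Your proposal is correct and matches the paper's intent: the paper states this corollary without proof, immediately after the corollary for $1_X$ whose proof reads ``follows directly from Theorem~\ref{innerproduct} by setting $G=\mathbb{1}_X$,'' so the intended argument is precisely the specialization $g=1_A$ in Theorem~\ref{innerproduct} that you carry out. Your extra remark that $1_A\in L^2(X)$ because $\mu$ is finite is a nice touch that the paper leaves implicit.
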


Next, we also introduce the representation of the derivative of a function in the hyperdimensional space. Therefore, we add the assumption that $X\subset\mathbb{R}$ is a real interval, consider the metric $d(x,x')=|x-x'|$ and use the standard definition of the derivative. 

\begin{definition}
    Let $\D : X \rightarrow \mathbb{R}^D$ be a normalized hyperdimensional encoding of $X$ of which the components $\Delta^{\varphi}_i$ are functions that are $n$ times differentiable at $x \in X$. Then we say that $\D$ is $n$ times differentiable at $x$ and the $n$-th order derivative at $x$ is elementwisely given by
    \[
    \Delta^{\varphi, (n)}(x) := \left[ \frac{{\mathrm d}^n}{{\mathrm d}x^n}\Delta^{\varphi}_1(x), \frac{{\mathrm d}^n}{{\mathrm d}x^n}\Delta^{\varphi}_2(x), \ldots , \frac{{\mathrm d}^n}{{\mathrm d}x^n}\Delta^{\varphi}_D(x)  \right]
    \,.
    \]
\end{definition}

\begin{theorem}
\label{th:derivative}
     Let $\D : X \rightarrow \mathbb{R}^D$ be a normalized hyperdimensional encoding of $X$
    that is $n$ times differentiable at $x$ and $\Tilde{f} = \hinv \h f$ the back-transformed function of $f\in L^2(X)$, then $\Tilde{f}$ is also $n$ times differentiable at $x$ and
    \[
    \frac{{\mathrm d}^n}{{\mathrm d}x^n}\tilde{f}(x) = \inner{F}{\Delta^{\varphi, (n)}(x)}
    \,,
    \]
    with $F=\h f$.
\end{theorem}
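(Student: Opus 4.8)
The plan is to write $\tilde f(x) = \langle F, \Delta^{\varphi}(x)\rangle = \sum_{i=1}^D F_i\, \Delta^{\varphi}_i(x)$ explicitly as a finite linear combination of the component functions $\Delta^{\varphi}_i$, and then differentiate term by term. Since $F \in \mathbb{R}^D$ is a fixed vector and each $\Delta^{\varphi}_i$ is assumed $n$ times differentiable at $x$, a finite sum of $n$-times-differentiable functions is again $n$ times differentiable at $x$, so $\tilde f$ is $n$ times differentiable at $x$ with
\[
\frac{{\mathrm d}^n}{{\mathrm d}x^n}\tilde f(x) = \sum_{i=1}^D F_i\, \frac{{\mathrm d}^n}{{\mathrm d}x^n}\Delta^{\varphi}_i(x) = \inner{F}{\Delta^{\varphi,(n)}(x)}\,,
\]
by the very definition of $\Delta^{\varphi,(n)}(x)$. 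This is essentially the whole argument: the inverse transform outputs a function lying in the $D$-dimensional span of the encoding components, so its smoothness and derivatives are inherited directly from those components, and linearity of differentiation finishes it.

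The only genuine subtlety — and the step I would flag as the main obstacle, though a mild one — is justifying that one may indeed pull the $n$-th derivative \emph{inside} the (finite) sum / inside the scaled Euclidean inner product $\inner{\cdot}{\cdot}$. Because the sum is finite this requires nothing more than the elementary fact that $({\mathrm d}^n/{\mathrm d}x^n)$ is linear and that each summand is differentiable at $x$; there is no interchange-of-limit-and-integral issue here (in contrast to Proposition~\ref{commutationTint} or Theorem~\ref{quadraticapprox}), since $F = \h f$ has already been computed and is just a constant vector. I would state this cleanly, perhaps noting that $\inner{\cdot}{\cdot}$ being the Euclidean inner product scaled by $D$ does not affect linearity.

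I would then add one remark tying this back to the earlier material: combining with Theorem~\ref{th:derivative}'s hypothesis and the representation $\tilde f(x) = \int_{x'\in X} f(x')\inner{\Delta^{\varphi}(x)}{\Delta^{\varphi}(x')}{\mathrm d}\mu(x')$ from the earlier remark, the $n$-th derivative of the smoothed function can equivalently be written as $\int_{x'\in X} f(x')\inner{\Delta^{\varphi,(n)}(x)}{\Delta^{\varphi}(x')}{\mathrm d}\mu(x')$, i.e.\ differentiating the smoothing kernel in its first argument; this is the form most useful for the differential-equations application in Section~\ref{sec:differentialandintegralequations}, and it follows by the same term-by-term argument applied under the Bochner integral. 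That optional extension does reintroduce a genuine interchange question (derivative vs.\ Bochner integral), which one would handle by the boundedness of the encoding components and their derivatives, but it is not needed for the stated theorem.
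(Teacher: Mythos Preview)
Your proof is correct and is essentially the paper's own argument: both rely on the fact that $\tilde f(x)=\inner{F}{\D(x)}$ is a finite linear combination of the component functions $\D_i$, so differentiation passes through by linearity. The paper phrases this via the limit definition of the derivative and bilinearity of the inner product (proving the case $n=1$ and then appealing to recursion), whereas you expand the finite sum and differentiate term by term for all $n$ at once---same content, slightly different packaging.
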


\begin{proof}
    Because of the linearity of the inner product, we have
    \arraycolsep=2pt
    \begin{eqnarray*}
     \frac{{\mathrm d}}{{\mathrm d}x}\Tilde{f}(x) &=& \lim_{|h|\rightarrow 0} \frac{\Tilde{f}(x+h)-\Tilde{f}(x)}{h}\\
   &=& \lim_{|h|\rightarrow 0} \frac{\inner{F}{\Delta^{\varphi}(x+h)}-\inner{F}{\Delta^{\varphi}(x)}}{h}\\
    &=& \lim_{|h|\rightarrow 0} \inner{F}{\frac{\Delta^{\varphi}(x+h) - \Delta^{\varphi}(x)}{h}}\\
    &=& \inner{F}{\Delta^{\varphi, (1)}(x)}
    \,.
    \end{eqnarray*}
    The higher-order derivatives follow from recursion.
    \end{proof}

\begin{remark}
    Note that $\frac{{\mathrm d}^n}{{\mathrm d}x^n}\tilde{f}(x) = \inner{F}{\Delta^{\varphi, (n)}(x)}$ expresses the linear functionals of function evaluation and derivative function evaluation of any order in a unified way as explicit inner products with $F\in\mathbb{R}^D$. Similarly, the integral operator is a linear functional that is explicitly represented as a dot product with $F$,  i.e.,~$\int_{x\in X} \Tilde{f}(x) d\mu(x)=\inner{F}{\mathbb{1}_{X}}$.
\end{remark}

Typically, in the context of hyperdimensional computing, the hyperdimensional representation is low-memory. Consider Example~\ref{intervalencoding}, where the stochastic process $\Phi$ takes values in $S=\{-1,1\}$ and the unnormalized hyperdimensional representation $\varphi(x)\in \{-1,1\}^D$ can be represented as a bit vector.
In that case, each component $\varphi_i$ of $~\varphi$, being a random function switching between 1 and -1 at a certain frequency (see Example~\ref{intervalencoding}), is thus not differentiable.
In practice, this is not necessarily a limitation for estimating derivatives. Note that the encoding assumes a finite length scale $l>0$ within which point representations $\varphi(x)$ and $\varphi(x')$ are correlated. One can argue that the location of $x$ is thus fuzzy w.r.t.~a precision $l$. 
Consequently, one can argue that it is reasonable to approximate the derivative with a finite difference $h$ close to the length scale $l$.
The finite-difference derivative of $\tilde{f}$ as an approximation of the true derivative can be exactly computed via the finite-difference derivative of the encoding. The proof is analogous to that of Theorem~\ref{th:derivative}.

In Figure~\ref{fig:derivativeencoding}(a), a few lower-order derivatives of the step function computed with a finite difference are shown. The step function illustrates a component $\varphi_i$ of the unnormalized encoding that switches between 1 and -1 at a certain frequency. As an alternative, one may replace the step function in $\varphi_i$ with smooth alternatives based on, e.g., the sigmoid function (see Figure~\ref{fig:derivativeencoding}(b)). 
The latter approach results in a smoother function recovery $\Tilde{f}$ and an exact derivative expression, however, at the cost of a more complex encoding compared to the simple $\{1,-1\}$-encoding.
\begin{figure}
    \centering
    \includegraphics[width=0.48\textwidth]{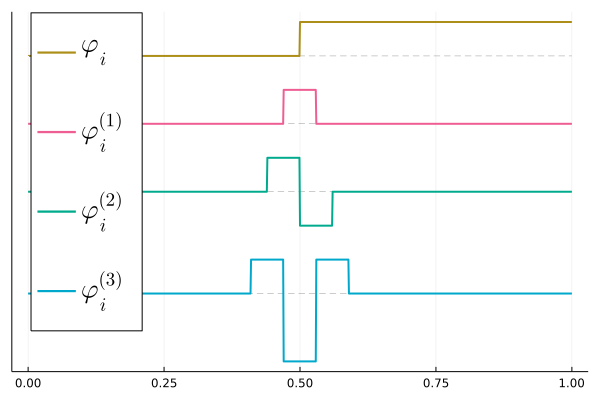}
    \includegraphics[width=0.48\textwidth]{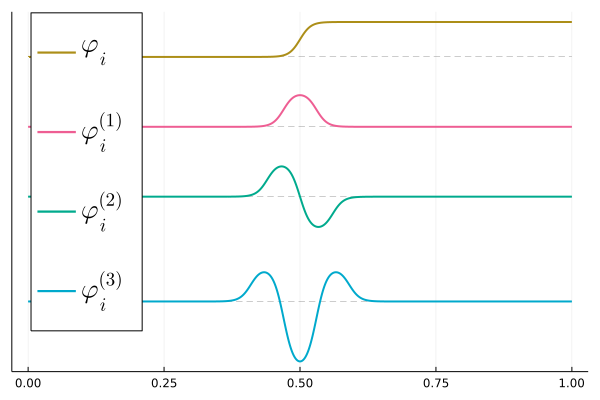}
    \caption{Lower-order derivatives of the step function as an illustration for the lower-order derivatives of a component $\varphi_i$ of the unnormalized encoding $\varphi$. As in Example~\ref{innerproduct}, $\varphi_i$ is a function switching between 1 and -1 at a certain frequency. In the left panel, a (centered) finite-difference derivative is computed. 
    In the right panel, a smooth differentiable alternative is used for the step function based on the sigmoid function. The functions are rescaled on the y-axis for proper visualization.}
    \label{fig:derivativeencoding}
\end{figure}

\section{Extensions to functions with multiple variables}
\label{sec:multiplevariables}
In this section, we extend the hyperdimensional transform to multivariate functions. We consider bivariate functions $f(x,y)$ with $x\in X$ and $y\in Y$; generalization to three or more variables is analogous and is not explicitly written down for the sake of brevity. 
Typical examples concern $X=Y=\mathbb{R}$, although one can envisage more involved settings. The only requirement is that the variables live in measure spaces.
The following assumptions stand throughout this section: $(X, \mathcal{A}, \nu)$ and $(Y, \mathcal{B}, \xi)$ are finite measure spaces; $\{\Phi(x) \mid x \in X\}$ and $\{\Psi(y) \mid y \in Y\}$ are stochastic processes taking values in bounded sets $S \subset \mathbb{R}$ and  $T \subset \mathbb{R}$, resp.; and $\Delta^{\varphi} : X \rightarrow \mathbb{R}^D$ and $\Delta^{\psi}: Y \rightarrow \mathbb{R}^D$ are normalized hyperdimensional encodings of $X$ and $Y$ w.r. t.~the stochastic processes $\Phi$ and $\Psi$. 
In the case that $X=Y$, the stochastic processes may be the same, but the sampled functions $\varphi$ and $\psi$ are always independent. Additionally, we assume that the normalized stochastic processes are zero-centered, {\em i.e.},~$\mathbb{E}\left[ \Delta^{\Phi}(x) \right] = \mathbb{E}\left[ \Delta^{\Psi}(y) \right] = 0$, for all $x \in X$ and $y\in Y$.

\subsection{Hyperdimensional representation: product encoding}
First, we introduce the hyperdimensional representation of a product space.

\begin{definition}
    The function $\Delta^{\varphi,\psi} : X\times Y \rightarrow \mathbb{R}^D$, given by
    \[
    \Delta^{\varphi,\psi}(x,y) = \Delta^{\varphi}(x) \otimes \Delta^{\psi}(y)
    \,,
    \]
     is called the product encoding of $\Delta^{\varphi}$ and $\Delta^{\psi}$. 
     Here, $\otimes$ denotes the elementwise product, i.e.,
    \[
    \Delta^{\varphi,\psi}_i(x,y) = \Delta^{\varphi}_i(x) \Delta^{\psi}_i(y)
    \,.
    \]
\end{definition}

Note that the order of $\varphi$ and $\psi$ in the notation $\Delta^{\varphi, \psi}$ is important: $\Delta^{\varphi,\psi}(x,y) = \Delta^{\varphi}(x) \otimes \Delta^{\psi}(y)$, while $\Delta^{\psi, \varphi}(x,y) = \Delta^{\psi}(x) \otimes \Delta^{\varphi}(y)$.

\begin{remark}
    In the limit of $D \rightarrow \infty$, we have
    \arraycolsep=2pt
    \begin{eqnarray*}
    \inner{\Delta^{\varphi}(x) \otimes \Delta^{\psi}(y)}{\Delta^{\varphi}( x' ) \otimes \Delta^{\psi}(y')} =  \inner{\Delta^{\varphi}(x) }{\Delta^{\varphi}(x')}
    \inner{\Delta^{\psi}(y)}{ \Delta^{\psi}(y')}    \,,
    \end{eqnarray*}
    which is a basic result from statistics on the covariance of products of zero-centered random variables. 
    This general outer product (or tensor product) property motivates the use of $\otimes$ for denoting the elementwise product. The property holds for infinite dimensionality and holds only approximately for finite dimensionality. The advantage of this approximation is that the dimensionality $D$ is a constant, whereas the dimensionality of a real outer product increases as $D^2$.
\end{remark}

Consider the product measure space $(X \times Y, \mathcal{A} \times \mathcal{B}, \mu)$. Here, 
$\mathcal{A} \times \mathcal{B}$ is the $\sigma$-algebra generated by the Cartesian products of elements 
of $\mathcal{A}$ and $\mathcal{B}$. The product measure $\mu$ is uniquely determined as $\mu(A \times B)=\nu(A)\xi(B)$, for any $A \in \mathcal{A}$ and $B \in \mathcal{B}$, if both measure 
spaces are $\sigma$-finite, which is a standard assumption (a finite measure space is also $\sigma$-finite).

With this product measure on the product space $X\times Y$, and the product encoding $\Delta^{\varphi, \psi} : X\times Y \rightarrow \mathbb{R}^D$, the hyperdimensional transform of $f\in L^2(X\times Y)$ takes the form
\[
 F 
    =\mathcal{H}^{\Delta^{\varphi, \psi}}f
    = \int_{X \times Y} f(x,y)\Delta^{\varphi,\psi}(x,y) {\mathrm d}\mu(x,y)
    \,.
    \]
According to Fubini's theorem for product measures, this integral can be computed using iterated integrals and the order of integration can be changed, {\em i.e.},
    \[
    \begin{split}
    F &= \int_{Y} \left( \int_{X} f(x,y)\Delta^{\varphi,\psi}(x,y) {\mathrm d}\nu(x) \right) {\mathrm d}\xi(y)\\
    &= \int_{X} \left( \int_{Y} f(x,y)\Delta^{\varphi,\psi}(x,y) {\mathrm d}\xi(y) \right) {\mathrm d}\nu(x)
    \,.
    \end{split}
    \]

As a product measure space is again a measure space itself, the aforementioned theory on the hyperdimensional transform, inverse transform and approximation properties, is still applicable. In what follows, we add some additional results that apply in particular to product measure spaces.

\subsection{Marginalisation}\label{sec:condint}
As a second extension for multiple variables, we describe how one can integrate a single variable while fixing the others.

\begin{theorem}
\label{th:conditinalintegrals}
    Let $1_Y : Y \rightarrow\{1\}$ be the simple function mapping all elements of $Y$ to 1 and $\mathbb{1}_{Y} = \mathcal{H}^{\Delta^{\psi}} 1_Y$ its hyperdimensional transform. Let $f : X \times Y \rightarrow \mathbb{R}$ be a bivariate function in $L^2(X \times Y)$, $F=\mathcal{H}^{\Delta^{\varphi,\psi}} f$ its hyperdimensional transform, and $\Tilde{f}=\tilde{\mathcal{H}}^{\Delta^{\varphi,\psi}} \mathcal{H}^{\Delta^{\varphi,\psi}} f$ the back-transformed function, then
    \arraycolsep=2pt
    \begin{eqnarray}
     \int_{Y} \Tilde{f}(x,y){\mathrm d}\xi(y)
        &=&  \inner{F}{\Delta^{\varphi}(x)\otimes \mathbb{1}_{Y}} \label{eq:first} \\ 
        &=& \inner{F \otimes \Delta^{\varphi}(x)}{\mathbb{1}_{Y}} \label{eq:second} \\
        &=&  \inner{F \otimes \mathbb{1}_{Y}}{\Delta^{\varphi}(x)} \label{eq:third}
        \,.
        \end{eqnarray}
    \end{theorem}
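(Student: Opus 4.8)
The plan is to reduce everything to the one-variable results already proved, principally Theorem~\ref{innerproduct} (inner products of transforms equal integrals against back-transforms) and Proposition~\ref{commutationTint} (the transform commutes with integration, via the Bochner-integral property). First I would unpack the back-transform: by Definition~\ref{def:inversetransform} applied to the product encoding, $\Tilde{f}(x,y) = \inner{F}{\Delta^{\varphi,\psi}(x,y)} = \inner{F}{\Delta^{\varphi}(x)\otimes\Delta^{\psi}(y)}$. Then integrating over $y$ and pulling the (bounded linear, hence Bochner-compatible) functional $\inner{F}{\cdot}$ through the $\xi$-integral — exactly the mechanism of Proposition~\ref{commutationTint}, now with the product measure and Fubini to isolate the $y$-integral — gives
\[
\int_Y \Tilde{f}(x,y)\,{\mathrm d}\xi(y) = \inner{F}{\int_Y \Delta^{\varphi}(x)\otimes\Delta^{\psi}(y)\,{\mathrm d}\xi(y)} = \inner{F}{\Delta^{\varphi}(x)\otimes\!\int_Y \Delta^{\psi}(y)\,{\mathrm d}\xi(y)}\,,
\]
where in the last step the fixed vector $\Delta^{\varphi}(x)$ comes out of the $y$-integral because $\otimes$ is the elementwise product and integration is componentwise. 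Since $\int_Y \Delta^{\psi}(y)\,{\mathrm d}\xi(y) = \mathcal{H}^{\Delta^{\psi}}1_Y = \mathbb{1}_{Y}$ by Definition~\ref{def:transform}, this is precisely Eq.~(\ref{eq:first}).

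For Eqs.~(\ref{eq:second}) and~(\ref{eq:third}) the task is purely algebraic: I would establish the elementary identity $\inner{a}{b\otimes c} = \inner{a\otimes c}{b} = \inner{a\otimes b}{c}$ for vectors $a,b,c\in\mathbb{R}^D$, which is immediate from writing the scaled Euclidean inner product componentwise as $\frac{1}{D}\sum_i a_i b_i c_i$ (the expression is symmetric in $a,b,c$, and $\otimes$ is the elementwise product). Applying this with $a=F$, $b=\Delta^{\varphi}(x)$, $c=\mathbb{1}_{Y}$ converts Eq.~(\ref{eq:first}) into Eqs.~(\ref{eq:second}) and~(\ref{eq:third}) directly.

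The only genuine subtlety — the main obstacle, though a mild one — is justifying the interchange of $\inner{F}{\cdot}$ with the $y$-integral and the use of Fubini for the product encoding: one needs $\Delta^{\varphi,\psi}$ to be Bochner integrable w.r.t.\ the product measure (so that the iterated-integral manipulations of the previous subsection apply), and one needs the back-transformed function to genuinely be the integrand. Both are already in hand: $\Delta^{\varphi}$ and $\Delta^{\psi}$ are bounded and Bochner integrable by assumption, so their elementwise product is bounded on the finite product measure space and hence Bochner integrable, and $\inner{F}{\cdot}$ is a bounded linear functional on $\mathbb{R}^D$, to which Proposition~\ref{commutationTint}'s underlying lemma (bounded linear operators commute with Bochner integration) applies verbatim. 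So the proof is essentially a careful bookkeeping exercise: expand $\Tilde{f}$ via the inverse transform, commute the functional through the $y$-integral, recognize $\mathbb{1}_Y$, and then permute the three elementwise factors inside the inner product.
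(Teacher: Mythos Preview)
Your proposal is correct and follows essentially the same approach as the paper: commute the bounded linear functional $\inner{F}{\cdot}$ through the Bochner integral (Proposition~\ref{commutationTint}) and recognize $\int_Y \Delta^{\psi}(y)\,{\mathrm d}\xi(y)=\mathbb{1}_Y$. The only cosmetic differences are that the paper argues in the opposite direction (starting from $\inner{F}{\Delta^{\varphi}(x)\otimes\mathbb{1}_Y}$ and expanding $F$ fully, which is where Fubini actually enters), and that the paper dismisses Eqs.~(\ref{eq:second})--(\ref{eq:third}) as ``analogous'' whereas you make the underlying symmetric-trilinear identity $\frac{1}{D}\sum_i a_i b_i c_i$ explicit---a cleaner justification.
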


    \begin{proof} 
    We prove the first equality, the other ones being analogous. Filling in the expressions, and using Proposition~\ref{commutationTint} and Fubini's theorem, we have
    \arraycolsep=2pt
    \begin{eqnarray*}
        \inner{F}{\Delta^{\varphi}(x)\otimes \mathbb{1}_{Y}} &=& \left\langle{\int_{X} \int_{Y} f(x',y')\Delta^{\varphi}(x')\otimes\Delta^{\psi}(y'){\mathrm d}\nu(x'){\mathrm d}\xi(y')}\right.  \left. ,{\Delta^{\varphi}(x) \otimes \int_{Y} \Delta^{\psi}(y) {\mathrm d}\xi(y)}\right\rangle\\
        &=& \int_{Y} \left[ \int_{X} \int_{Y} f(x',y') \inner{\Delta^{\varphi,\psi}(x',y')}{\Delta^{\varphi,\psi}(x,y)}\right. \left. {\mathrm d}\nu(x'){\mathrm d}\xi(y') \right] {\mathrm d}\xi(y) \\
        &=& \int_{\mathcal{Y}} \Tilde{f}(x,y) {\mathrm d}\xi(y)
        \,.
    \end{eqnarray*}
    \end{proof}

\begin{remark}
    The three expressions in Theorem~\ref{th:conditinalintegrals} have particular interpretations, which one might interpret as a basis for Bayesian inference with complex distributions:
   
    Eq.~(\ref{eq:first}): Using the extension of the hyperdimensional transform for measures, one can interpret the expression $\Delta^{\varphi}(x)\otimes \mathbb{1}_{Y}$ as $\mathcal{H}_*^{\Delta^{\varphi}}\delta_x \otimes \mathcal{H}^{\Delta^{\psi}}1_Y$. The inner product thus represents the evaluation of a function that is Dirac-distributed in the variable $x$ and has a constant density $1$ in the variable $y$.
     
    Eq.~(\ref{eq:second}): The expression $F \otimes \Delta^{\varphi}(x)$ can be seen as the representation of a univariate function in the variable $y$, conditioned on $x$. This univariate function is integrated w.r.t.\ the variable $y$ by the inner product with $\mathbb{1}_Y$.
       
    Eq.~(\ref{eq:third}): The expression $F \otimes \mathbb{1}_{Y}$ can be seen as a marginal univariate function in the variable $x$. The inner product with $\D(x)$ is then simply a function evaluation of this function at $x$. Marginalizing a multivariate function in hyperdimensional space thus simply corresponds to an elementwise vector multiplication.

\end{remark}

\subsection{Partial derivatives and gradients}
For a last extension for multiple variables, we add the assumption that $X\subset\mathbb{R}$ and $Y\subset\mathbb{R}$ are real intervals and we use the standard definition of the (partial) derivative. 

\begin{theorem}
Let $\Delta^{\varphi}$ and $\Delta^{\psi}$ be normalized hyperdimensional encodings of $X$ and $Y$ that are differentiable at $(x,y)\in X\times Y$, and let $f : X \times Y \rightarrow \mathbb{R}$ be a bivariate function in $L^2(X \times Y)$, $F=\mathcal{H}^{\Delta^{\varphi,\psi}} f$ its hyperdimensional transform, and $\tilde{f} = \tilde{\mathcal{H}}^{\Delta^{\varphi,\psi}} 
 \mathcal{H}^{\Delta^{\varphi,\psi}} f$ the back-transformed function, then the gradient of $\Tilde{F}$ at $(x,y)$ exists and
\[
\frac{\partial}{\partial x}\Tilde{f}(x,y) = \inner{F}{\Delta^{\varphi, (1)}(x)\otimes \Delta^{\psi}(y)}
\]
and
\[
\frac{\partial}{\partial y}\Tilde{f}(x,y) = \inner{F}{\Delta^{\varphi}(x)\otimes \Delta^{\psi, (1)}(y)}
\,.
\]
\end{theorem}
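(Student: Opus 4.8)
The plan is to reduce the statement to the one-variable derivative result, Theorem~\ref{th:derivative}, by exploiting the product structure of the encoding and the linearity of the inner product, exactly as in the proofs of the marginalisation results in this section. First I would fix $y \in Y$ and regard $\Tilde f(\cdot, y) : X \to \mathbb{R}$ as a univariate function; the goal is to compute its partial derivative in $x$ at the point $(x,y)$ directly from the definition as a limit of difference quotients. Writing out $\Tilde f(x,y) = \inner{F}{\Delta^{\varphi,\psi}(x,y)} = \inner{F}{\Delta^{\varphi}(x)\otimes \Delta^{\psi}(y)}$, the difference quotient in $x$ becomes
\[
\frac{\Tilde f(x+h,y) - \Tilde f(x,y)}{h}
= \inner{F}{\frac{\Delta^{\varphi}(x+h) - \Delta^{\varphi}(x)}{h}\otimes \Delta^{\psi}(y)}\,,
\]
using bilinearity of $\inner{\cdot}{\cdot}$ and the fact that $\otimes$ is the elementwise product, so that $\big(a - b\big)\otimes c = a\otimes c - b \otimes c$ holds componentwise with $c = \Delta^{\psi}(y)$ held fixed.

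Next I would pass to the limit $|h|\to 0$. Since $\inner{\cdot}{\cdot}$ is a fixed bilinear form on the finite-dimensional space $\mathbb{R}^D$, it is continuous; and since each component $\Delta^{\varphi}_i$ is differentiable at $x$ by hypothesis, the vector $\big(\Delta^{\varphi}(x+h) - \Delta^{\varphi}(x)\big)/h$ converges to $\Delta^{\varphi,(1)}(x)$ as $|h|\to 0$. Because $\Delta^{\psi}(y)$ does not depend on $h$, the elementwise product $\frac{\Delta^{\varphi}(x+h)-\Delta^{\varphi}(x)}{h}\otimes\Delta^{\psi}(y)$ converges to $\Delta^{\varphi,(1)}(x)\otimes\Delta^{\psi}(y)$, and continuity of the inner product lets me exchange limit and $\inner{\cdot}{\cdot}$, yielding
\[
\frac{\partial}{\partial x}\Tilde f(x,y) = \inner{F}{\Delta^{\varphi,(1)}(x)\otimes\Delta^{\psi}(y)}\,.
\]
The existence of the partial derivative is obtained along the way, since the limit of the difference quotient exists. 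The statement for $\partial/\partial y$ follows by the symmetric argument with the roles of $\varphi$ and $\psi$ swapped. Finally, the existence of the gradient at $(x,y)$ follows from the existence of both partial derivatives together with the fact that $\Tilde f$ is (jointly) differentiable: since each component $\Delta^{\varphi,\psi}_i(x,y) = \Delta^{\varphi}_i(x)\Delta^{\psi}_i(y)$ is a product of two one-variable differentiable functions, it is jointly differentiable, hence so is the finite linear combination $\Tilde f = \inner{F}{\Delta^{\varphi,\psi}(\cdot,\cdot)}$, and differentiability implies the gradient exists and has the two computed partials as its entries.

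I do not expect any serious obstacle: the argument is essentially Theorem~\ref{th:derivative} applied slice-wise, and the only point needing a line of care is justifying that one may take the $h\to 0$ limit inside the inner product and past the elementwise product with the fixed factor $\Delta^{\psi}(y)$ — this is immediate from finite-dimensionality and bilinearity but should be stated. A minor bookkeeping remark: the theorem as stated writes $\Tilde F$ where it clearly means $\Tilde f$; I would simply prove the gradient of $\Tilde f$ exists. As in Theorem~\ref{th:derivative}, higher-order partial and mixed derivatives would then follow by iterating this computation, though only first-order partials are asserted here.
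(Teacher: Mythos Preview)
Your proposal is correct and follows essentially the same approach as the paper, which simply states that the proof is analogous to that of Theorem~\ref{th:derivative}. You have in fact written out this analogy in more detail than the paper does, including the justification for passing the limit through the inner product and the elementwise product, and your observation about the $\Tilde{F}$/$\Tilde{f}$ typo is accurate.
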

\begin{proof}
    The proof is analogous to that of Theorem~\ref{th:derivative}.
\end{proof}

\section{Application: expressing linear differential and integral equations}
\label{sec:differentialandintegralequations}
In this section, we illustrate how the functionals of function evaluation, derivative function evaluation, and integral evaluation as explicit inner products in hyperdimensional space naturally allow for expressing linear differential and integral equations. Contrary to solving differential equations via other integral transforms (e.g., Laplace or Fourier), no analytical expressions for the transform or the inverse transform are required. Instead, the hyperdimensional transform offers a more numerical approach, where the infinite-dimensional function is approximated by a vector of finite, large dimensionality~$D$. 
This approach unifies solving differential equations and performing linear regression, thus establishing a connection with the fields of statistical modelling and machine learning.

We retain the standard assumptions from Section~\ref{sec:transform}, and additionally assume that $X \subset \mathbb{R}$ is a real interval.

\subsection{Linear differential equations}
Consider the general form of a linear differential equation for $x\in [a,b]$:
\[
    a_0(x)f(x) + a_1(x)\frac{{\mathrm d}}{{\mathrm d}x}f(x)
    + \cdots + a_n(x)\frac{{\mathrm d}^n}{{\mathrm d}x^n}f(x) = b(x)
    \,.
\]
Approximating the solution as $\Tilde{f}=\hinv F$ with $F\in \mathbb{R}^D$ and using $\frac{{\mathrm d}^n}{{\mathrm d}x^n}\tilde{f}(x) = \inner{F}{\Delta^{\varphi, (n)}(x)}$, the differential equation takes the following form:

\begin{eqnarray*}
    a_0(x)\inner{F}{\Delta^{\varphi}(x)} + a_1(x)\inner{F}{\Delta^{\varphi, (1)}(x)}  + \cdots + a_n(x)\inner{F}{\Delta^{\varphi, (n)}(x)} = b(x)\,.
\end{eqnarray*}
Equivalently, we have:
\[
\inner{F}{a_0(x)\Delta^{\varphi}(x) + a_1(x)\Delta^{\varphi,(1)}(x) + \cdots + a_n(x)\Delta^{\varphi,(n)}(x)} = b(x)
\,.
\]
Imposing that the differential equation must hold at points $x^i$, $i=1,\ldots,m$,
leads to a system of $m$ equations:
\begin{equation}
\label{eq:linearregression}
XF=B
\,.
\end{equation}
The $m \times D$ matrix $X$ stacks the representations of the right-hand sides of the inner products at each of the $m$ points, {\em i.e.},
\[
X =
\begin{bmatrix}
\left[ a_0(x^1)\Delta^{\varphi}(x^1) + a_1(x^1)\Delta^{\varphi,(1)}(x^1) + \cdots + a_n(x^1)\Delta^{\varphi,(n)}(x^1) \right]^\mathrm{T} \\[.2cm]
\left[ a_0(x^2)\Delta^{\varphi}(x^2) + a_1(x^2)\Delta^{\varphi,(1)}(x^2) + \cdots + a_n(x^2)\Delta^{\varphi,(n)}(x^2) \right]^\mathrm{T} \\[.2cm]
\vdots \\[.2cm]
\left[ a_0(x^m)\Delta^{\varphi}(x^m) + a_1(x^m)\Delta^{\varphi,(1)}(x^m) + \cdots + a_n(x^m)\Delta^{\varphi,(n)}(x^m) \right]^\mathrm{T} \\
\end{bmatrix}
\,.
\]
The $D$-dimensional vector $B$ stacks the function evaluations of $b$ at these points, {\em i.e.},
\[
B = 
\begin{bmatrix}
b(x^1) \\[.2cm]
b(x^2) \\[.2cm]
\vdots \\[.2cm]
b(x^m) \\
\end{bmatrix}
\,.
\]
System~\eqref{eq:linearregression} imposes the differential equation at the points $x^i$, $i=1,\ldots,m$. If the maximal distance between the $m$ points is not greater than the length scale of the encoding, then there are no points $x\in [a,b]$ such that $\inner{\D(x)}{\D(x^i)}=0$.  Due to the correlation within the length scale $l$, the entire domain is thus taken into account in the system corresponding to the $m$ points.

Note that this system takes the exact same form of a regular linear regression problem with $F$ the model parameters searched for and $X$ the matrix with highly nonlinear features. With an ordinary least squares assumption and a small regularization term for numerical stability, such a problem is typically solved either via the exact solution of ridge regression or via iterative methods such as conjugate gradient descent. The main difference with linear regression is that the $m$ data points are now not just $m$ simple $0$-th order function observations in the form of $f(x^i)=b_i$. Instead, each data point can now express a more complex higher-order function observation, e.g., $f(x^i)+ f'(x^i) = b_i$. Also, boundary conditions of any order can be added to the equations as they can also be expressed as inner products and take the very same form. Adding the equations for the boundary conditions, the system of linear equations that expresses the differential equation and the boundary conditions can be written as $X_c F = B_c$.
We refer to Figure~\ref{fig:diffequation} for some examples. Note that, with this finite dimensionality, the result almost seems not noisy at all.
Here, $\Tilde{f}$ was optimized to match the differential equation as good as possible. The conditions on the derivatives of $\Tilde {f}$ and the ridge regularization may ensure a smoother $\Tilde{f}$.

\begin{figure}
        \centering        
        \includegraphics[width=0.65\textwidth]{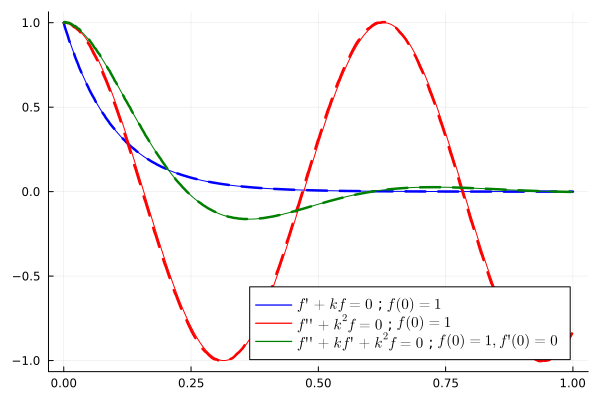}
        \caption{Hyperdimensional solutions of basic differential equations describing exponential decay, the harmonic oscillator, and the damped harmonic oscillator. 
        Both the hyperdimensional solutions (dashed line), taking into account the boundary conditions, and the analytical solutions are plotted (solid line). We set $k=10$. The encoding of Example~\ref{intervalencoding}, a dimensionality $D=5000$, a length scale $l=0.05$, a finite difference $h=l/5 $ for derivation, and 500 equidistant points $x^i \in [0,1]$ were used. The solution was computed using the exact solution of ridge regression~\cite{mcdonald2009ridge}, with a small ridge regression parameter $\lambda=1$ for numerical stability.}
        \label{fig:diffequation}
    \end{figure}


\begin{remark}
    The above approach unifies performing linear regression and solving a differential equation. When no data points for regression are given, the differential equation will dominate, and vice versa. 
    One may also consider linear regression as the main task, and see the differential equation as a kind of regularization. For example, when little or no data points for regression are available, one may impose the function to behave as a constant, linearly, quadratically, etc., by imposing $f'(x)=0$, $f''(x)=0$, $f'''(x)=0$, etc.
\end{remark}

Solving a differential equation in hyperdimensional space takes this simple form because the function is represented as a vector $F$, and the functionals that query a function evaluation, a derivative function evaluation, etc., are all represented as inner products with $F$. A system of $m$ linear equations in the components of $F$ 
can then simply be constructed by expressing at which $m$ points the equations must hold. 

\subsection{Linear integral equations}
The very same reasoning holds for integral equations. Next, we show how solving an integral equation can be turned into solving a linear regression problem.
A prominent example of a nonlinear integral equation is the Fredholm equation of the second type:
\[
f(x) = b(x) + \lambda \int_c^d k(y,x) f(y) {\mathrm d}y
\,.
\]
The functions $b : [c,d]\rightarrow \mathbb{R}$ and $k: [c,d] \times [c,d] \rightarrow \mathbb{R}$ are given, $\lambda$ is a constant, and $f:[c,d]\rightarrow \mathbb{R}$ is the function searched for. We approximate the solution as $\Tilde{f}=\h f$ and 
use $K=\mathcal{H}^{\Delta^{\varphi, \psi}} k$ as the hyperdimensional representation of $k$. 
Recall that the multivariate transform $K=\mathcal{H}^{\Delta^{\varphi, \psi}} k$ w.r.t.~the
hyperdimensional encoding $\Delta^{\varphi, \psi}$ assumes that the first variable (here, $y$) and the second variable (here, $x$) are independently encoded by $\D$ and $\Delta^{\psi}$, respectively.
The integral equation can be written as
\begin{equation}
\label{eq:integralequation}
\inner{F}{\Delta^{\varphi}(x)} = b(x) + \lambda \inner{F}{K \otimes \Delta^{\psi}(x)}
\,.
\end{equation}
Note that the inner product on the right-hand side of Eq.~(\ref{eq:integralequation}) integrates the variable~$y$ encoded by $\D$, while the variable~$x$, encoded by $\Delta^{\psi}$, is used for conditioning $k(y, x)$ (see marginalisation of multivariate functions in Section~\ref{sec:condint}).
The equation can be rewritten as a single inner-product equation 
\[
\inner{F}{\Delta^{\varphi}(x) - \lambda K \otimes \Delta^{\psi}(x)} = b(x)
\,,
\]
such that a linear regression matrix equation is again obtained by choosing points~$x^i$, $i=1,\ldots,m$.

\section{Connections with other integral transforms}
\label{sec:othermethodsandfuturedirections}
In this section, we first relate the hyperdimensional transform to other integral transforms in general, focusing on prominent examples such as the transforms of Laplace and Fourier. Second, we discuss the close connection with the fuzzy transform in greater detail.

\subsection{Integral transforms}
As introduced in Section~\ref{sec:intro}, the hyperdimensional transform is an integral transform just like the Laplace transform, the Fourier transform and the fuzzy transform. While the Laplace and Fourier transforms yield functions of complex or real variables, the hyperdimensional transform and the fuzzy transform yield functions with as domain a finite set.
Vectorizing the function values, the fuzzy and hyperdimensional transform can be interpreted as function-to-vector transformations. 

On the one hand, the finite dimensionality $D$ of the hyperdimensional transform might imply less expressivity and entail some loss of information, while the random nature of the basis functions introduces stochastic noise. However, these effects diminish with increasing dimensionality $D$ of the vector. Hence, the dimensionality is assumed large.

On the other hand, the transformation to a finite-dimensional vector makes the computation of the integral tractable for a broader set of functions: each component of the transform can be computed directly without the need for an analytical expression. Note that the hyperdimensional transform is defined for any abstract universe $X$ that is provided with a measure, allowing,  e.g., for representing functions on sets, sequences, or graphs.

The hyperdimensional transform opens a distinct approach to solving differential equations. Instead of an analytical solution, an approximate solution can be computed. Thanks to the natural expressions of the functionals that include differentiation and integration, the hyperdimensional transform converts linear differential equations and linear integral equations into linear matrix equations, unifying them with linear regression.

While the Fourier transform decomposes a function in an infinite set of wave functions of all possible frequencies, the hyperdimensional transform decomposes a function in random wave-like functions. For instance, in Example~\ref{intervalencoding}, these wave-like functions randomly switch between $1$ and $\textrm{-}1$ at some `average frequency' that is related to the length scale $l$. Due to the possibility of setting a finite length scale~$l$, the hyperdimensional transform allows for incorporating noisy data. 
Similarly, for the fuzzy transform (see Section~\ref{sec:fuzzytransform}), the lower expressivity due to the finite dimension and a notion of length scale allow for filtering noise.
Also, approaches in hyperdimensional computing based on holographic representations allow for noise-robust classification in machine learning. 

\subsection{The fuzzy transform}
\label{sec:fuzzytransform}
Because of its close connection with the hyperdimensional transform, we discuss the fuzzy transform in more detail. For a comprehensive overview of the fuzzy transform, we refer to~\cite{perfilieva2006fuzzy}. 

Let $[a,b]\subset \mathbb{R}$ be an interval and $x_1<\cdots<x_D$ fixed nodes such that $x_1=a$ and $x_D=b$. 
A set of basis functions $A_s: [a,b] \to [0,1]$, $s=1,\ldots,D$, is called  a \textit{fuzzy partition} of $[a,b]$
if the following conditions are satisfied, for $s=1,\ldots,D$:
\begin{itemize}
    \item[(i)] $A_s(x_s)=1$; 
    \item[(ii)] $A_s(x)=0$ if $x \notin\, ]x_{s-1},x_{s+1}[$ (with the convention $x_0=a$ and $x_{D+1}=b$);
    \item[(iii)] $A_s(x)$ is continuous;
    \item[(iv)] $A_s(x)$ strictly increases on $[x_{s-1}, x_s]$ and strictly decreases on $[x_s, x_{s+1}]$.
\end{itemize}
A prominent example is the set of uniform triangular basis functions given by $A_s(x) = \max \left(0, 1 - \frac{|x-x_s|} {\lambda} \right)$ with $\lambda=(b-a)/(D-1)$, illustrated in Figure~\ref{fig:fuzzybasic} with $[a,b]=[0,5]$ and $D=6$. Note the exact correspondence with the expression in Example~\ref{intervalencoding}, {\em i.e.},~$A_s(x) = \inner{\Phi(x)}{\Phi(x_s)}$.
\begin{figure}
    \centering
    \includegraphics[width=0.45\textwidth]{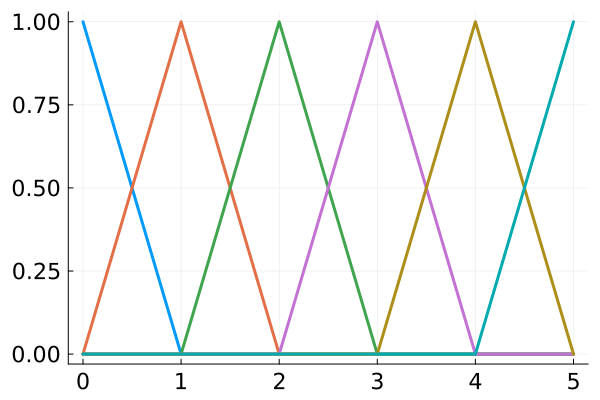}
    \caption{Example of a uniform triangular fuzzy partition of the interval [0,5] with six basis functions.}
    \label{fig:fuzzybasic}
\end{figure}

The components of the fuzzy transform $G$ of a function $g:[a,b]\to \mathbb{R}$ are given by
\begin{equation}
\label{eq:fuzzytransform}
G_s := \frac{\int_a^b g(x) A_s(x) {\mathrm d}x}{\int_a^b A_s(x) {\mathrm d}x}
\,.
\end{equation}
Each $s$-th component can thus be interpreted as a local weighted mean of the function around the node $x_s$. 
The back-transformed function is then given by
\begin{equation}
\label{eq:inversefuzzytransform}
\Tilde{g}(x) :=  \sum_{s=1}^{D} G_s A_s(x)\,.
\end{equation}
Following the definition of the functions $A_s$, the function $\Tilde{g}$ evaluated at node $x_s$ equals the component $G_s$, {\em i.e.},
\begin{eqnarray}
    \Tilde{g}(x_s) &=& G_s \\ &=& \frac{\int_a^b g(x) \nonumber A_s(x) {\mathrm d}x}{\int_a^b A_s(x) {\mathrm d}x} \label{eq:fuzzynodeevaluation} \\[.2cm]
   &=& \frac{\int_a^b g(x) \inner{\Phi(x)}{\Phi(x_s)} {\mathrm d}x}{\int_a^b \inner{\Phi(x)}{\Phi(x_s)} {\mathrm d}x}
    \,.
\end{eqnarray}
For the hyperdimensional transform, recall that
\begin{equation}
\label{eq:transform}
\Tilde{f}(x_s) =  \int_{x \in [a,b]} f(x) \frac{\inner{\Phi(x)}{\Phi(x_s)}}{n(x)n(x')} {\mathrm d}\mu(x)
\,,
\end{equation}
which is equivalent to Eq.~(\ref{eq:fuzzynodeevaluation}) if the normalization function $n$ can be determined as 
\[
n(x) = \sqrt{\int_{x \in [a,b]} \inner{\Phi(x)}{\Phi(x_s)} {\mathrm d}\mu(x)}
\]
and is constant. This is the case in Example~\ref{exampleperiodic} without boundaries and in Example~\ref{intervalencoding} if one may neglect the boundary effects (e.g., when $\lambda$ is small). 
In general, the normalization function is not a constant. One main difference between the hyperdimensional transform and the fuzzy transform is thus the way of normalization.

A second main difference is that for the fuzzy transform, $\Tilde{f}(x)$ in Eq.~(\ref{eq:inversefuzzytransform}) interpolates between nodes $x_s$, linearly in the case of triangular basis functions, while for the hyperdimensional transform, no specific choice for the nodes is made and Eq.~(\ref{eq:transform}) holds at any point $x$ and not only at nodes $x_s$. The hyperdimensionally back-transformed function can be interpreted as a moving window average, instead of a (linear) interpolation between averages, however, possibly with some stochastic noise, depending on the dimensionality and the smoothness of the encoding.

Similarly as for the hyperdimensional transform, the fuzzy transform can be used to solve (partial) differential equations and to handle noisy data~\cite{perfilieva2001fuzzy,perfilieva2004fuzzy, stepnicka2005numerical}. Derivatives are computed based on a finite difference between the components $F_s$ of the fuzzy transform; for more details on solving differential equations with the fuzzy transform, we refer to~\cite{perfilieva2004fuzzy}. The hyperdimensional transform may either use a finite difference or an infinitesimal difference, depending on whether the encoding is differentiable.
Both methodologies can be seen as approximate approaches to solving the differential equation with some finite length scale/precision. 

\section{Conclusion}

We formally introduced the hyperdimensional transform, allowing for the approximation of functions by holographic, high-dimensional representations called hyperdimensional vectors. We discussed general transform-related properties such as the uniqueness of the transform, approximation properties of the inverse transform, and the representation of inner products, integrals, and derivatives. The hyperdimensional transform offers theoretical foundations and insights for research in the field of hyperdimensional computing.

We also demonstrated how this transform can be used to solve linear differential and integral equations and discussed the connection with other integral transforms, such as the Laplace transform, the Fourier transform, and the fuzzy transform. Due to its capabilities of handling noisy data, we also anticipate applications in the fields of machine learning and statistical modelling. In our future work, we will elaborate further in this direction. Obvious aspects include an empirical estimation of the transform based on a sample of function evaluations, and a bipolar approximation of the transform that harnesses, even more, the fast and efficient capabilities of hyperdimensional computing. Additionally, the capability of the transform to represent entire signals, functions, or distributions as points in hyperdimensional space opens up new possibilities.


\bibliographystyle{abbrv}
\bibliography{references}

\end{document}